\newcommand{\bp}{\begin{proof} \small }
\newcommand{\ep}{\end{proof} \normalsize}
\newcommand{\epx}{\end{proof} \small}
\newcommand{\bpa}{\begin{proofappx} \footnotesize }
\newcommand{\epa}{\end{proofappx} \small }
\newtheorem{theorem}{Theorem}
\newtheorem{lemma}{Lemma}
\newtheorem*{theorem*}{Theorem}
\newtheorem*{proposition*}{Proposition}
\newtheorem*{corollary*}{Corollary}
\newtheorem*{lemma*}{Lemma}
\newtheorem*{assumption*}{Assumption}
\newtheorem*{definition*}{Definition}
\newtheorem*{claim*}{Claim}
\newcommand{\be}{\begin{equation}}
\newcommand{\ee}{\end{equation}}
\newcommand{\bs}{\begin{subequations}}
\newcommand{\es}{\end{subequations}}
\newcommand{\bq}{\begin{eqnarray}}
\newcommand{\eq}{\end{eqnarray}}
\newcommand{\bqn}{\begin{eqnarray*}}
\newcommand{\eqn}{\end{eqnarray*}}
\newcommand{\ba}{\left[ \begin{array}}
\newcommand{\ea}{\\ \end{array} \right]}
\newcommand{\ben}{\begin{enumerate}}
\newcommand{\een}{\end{enumerate}}
\def\real{{\mathchoice%
{\hbox{\rm\setbox1=\hbox{I}\copy1\kern-.45\wd1 R}}
{\hbox{\rm\setbox1=\hbox{I}\copy1\kern-.45\wd1 R}}
{\hbox{\scriptsize\rm\setbox1=\hbox{I}\copy1\kern-.45\wd1 R}}
{\hbox{\scriptsize\rm\setbox1=\hbox{I}\copy1\kern-.45\wd1 R}}}}
\def\Zint{{\mathchoice{\setbox1=\hbox{\sf Z}\copy1\kern-.75\wd1\box1}
{\setbox1=\hbox{\sf Z}\copy1\kern-.75\wd1\box1}
{\setbox1=\hbox{\scriptsize\sf Z}\copy1\kern-.75\wd1\box1}
{\setbox1=\hbox{\scriptsize\sf Z}\copy1\kern-.75\wd1\box1}}}
\newcommand{\complex}{ \hbox{\rm C\kern-0.45em\rule[.07em]{.02em}{.58em}%
\kern 0.43em}}
\begin{document}
%
\title{Online Learning for Offloading and Autoscaling in Energy Harvesting Mobile Edge Computing}

\author{Jie~Xu,~\IEEEmembership{Member,~IEEE,}
        Lixing~Chen,~\IEEEmembership{Student~Member,~IEEE,}
        Shaolei~Ren,~\IEEEmembership{Member,~IEEE}
\thanks{J. Xu and L. Chen are with the Department of Electrical and Computer Engineering, University of Miami. Email: jiexu@miami.edu, lx.chen@miami.edu.}
\thanks{S. Ren is with the Department of Electrical and Computer Engineering, University of California, Riverside. Email: sren@ece.ucr.edu}
}


%


\maketitle

\begin{abstract}
Mobile edge computing (a.k.a. fog computing) has recently emerged to enable \emph{in-situ} processing of delay-sensitive applications at the edge of mobile networks. Providing grid power supply in support of mobile edge computing, however, is costly and even infeasible (in certain rugged or under-developed areas), thus mandating on-site renewable energy as a major or even sole power supply in increasingly many scenarios. Nonetheless, the high intermittency and unpredictability of renewable energy make it very challenging to deliver a high quality of service to users in energy harvesting mobile edge computing systems.
In this paper, we address the challenge of incorporating renewables into mobile edge computing and propose an efficient reinforcement learning-based resource management algorithm, which learns on-the-fly the optimal policy of dynamic workload offloading (to the centralized cloud) and edge server provisioning to minimize the long-term system cost (including both service delay and operational cost). Our online learning algorithm uses a decomposition of the (offline) value iteration and (online) reinforcement learning, thus achieving a significant improvement of learning rate and run-time performance when compared to standard reinforcement learning algorithms such as Q-learning. We prove the convergence of the proposed algorithm and analytically show that the learned policy has a simple monotone structure amenable to practical implementation. Our simulation results validate the efficacy of our algorithm, which significantly improves the edge computing performance compared to fixed or myopic optimization schemes and conventional reinforcement learning algorithms.


\end{abstract}

\begin{IEEEkeywords}
Mobile edge computing, energy harvesting, online learning.
\end{IEEEkeywords}

%
\IEEEpeerreviewmaketitle

\section{Introduction}

In the era of mobile computing and Internet of Things, a tremendous amount of data is generated from massively distributed sources, requiring timely processing to extract its maximum value. Further, many emerging applications, such as mobile gaming and augmented reality, are delay sensitive and have resulted in an increasingly high computing demand that frequently exceeds what mobile devices can deliver. Although cloud computing enables convenient access to a centralized pool of configurable computing resources, moving all the distributed data and computing-intensive applications to clouds (which are often physically located in remote mega-scale data centers) is simply out of the question, as it would not only pose an extremely heavy burden on today's already-congested  backbone networks \cite{rivera2014gartner} but also result in (sometimes intolerable) large transmission  latencies that degrade the quality of service \cite{beck2014mobile,vaquero2014finding,shi2016edge}.

As a remedy to the above limitations, mobile edge computing (MEC) \cite{beck2014mobile,vaquero2014finding,shi2016edge} (a.k.a., fog computing \cite{chiang2016fog}) has recently emerged to enable \emph{in-situ} processing of (some) workloads locally at the network edge without moving them to the cloud. In MEC, network edge devices, such as base stations, access points and routers, are endowed with cloud-like computing and storage capabilities to serve users' requests as a substitute of clouds, while significantly reducing the transmission latency as they are placed in close proximity to end users and data sources. In this paper, we consider base station as the default edge device and refer to the combination of an edge device and the associated edge servers as an {\it edge system}.

Effective operation of MEC is contingent upon efficient power provisioning for the edge system. However, providing reliable and stable grid power supply in remote areas and hazardous locations can be extremely costly and even infeasible since construction and operation of transmission lines are often prohibitive, and grid-tied servers can violate environmental quality regulations in rural areas that are ecologically sensitive \cite{li2015towards}. For instance, in many developing countries, the majority of base stations have to be powered by continuously operating diesel generators because the electric grid is too unreliable \cite{TaoHan_UNCC_LoadBalancingRAN_HybriedEnergy_ToN}. In view of the significant carbon footprint of grid power as well as soaring electricity prices, off-grid renewable energy harvested from ambient vibrations, heat, wind and/or solar radiation is embraced as a major or even sole power supply for edge systems in the field, thanks to the recent advancements of energy harvesting techniques \cite{sudevalayam2011energy,ulukus2015energy}.

Despite the clear advantages, the high intermittency and unpredictability of renewable energy creates tremendous new challenges for fully reaping the benefits of MEC. Although batteries are often installed as an energy buffer, the computing capacity of an edge system is still significantly limited at any moment in time. As a result, although processing computation tasks at the edge reduces the transmission latency, a considerable processing time may occur when little power supply is available. This gives rise to an important trade-off between transmission delay and processing delay, which is jointly determined by the edge system's offloading policy (i.e. how much workload is offloaded to the cloud) and autoscaling policy (i.e. how many servers are dynamically provisioned or activated). The problem is further complicated due to the temporal correlation --- provisioning more servers and processing more workloads at the edge system in the current time means that fewer servers can be provisioned and fewer workloads can be processed locally in the future due to the limited and time-varying renewable energy supply. Figure \ref{system} illustrates the considered system architecture.

\begin{figure}
  \centering
  \includegraphics[width=0.45\textwidth]{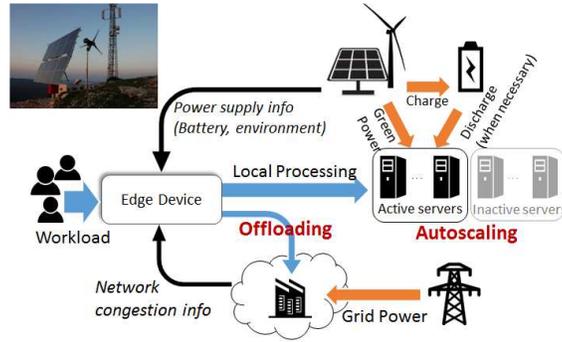}\\
  \caption{Architecture of a renewable-powered edge computing system. The photo shows a solar- and wind-powered base station deployed by Alcatel Lucent in Turkey. (Source: http://www.cellular-news.com/tags/solar/wind-power/)}\label{system}
\end{figure}

In this paper, we address the challenge of incorporating renewable energy into MEC and propose an efficient reinforcement learning-based resource management algorithm, which learns on-the-fly the optimal policy of dynamic workload offloading (to the centralized cloud) and edge server provisioning to minimize the long-term system cost (including both service delay and operational cost). Our main contributions are summarized as follows:
\begin{itemize}
  \item We formulate the joint offloading and edge server provisioning problem as a Markov decision process (MDP) by taking into account various unique aspects of the considered energy harvesting MEC system. The offloading and edge server provisioning decisions are jointly made according to the information of computation workload, core network congestion state, available battery power and anticipated renewable power arrival. By formulating the MDP, the edge system resource management is carried out in a foresighted way by taking future system dynamics into account, thereby optimizing the long-term system performance.
  \item We develop a novel post-decision state (PDS) based learning algorithm that learns the optimal joint offloading and autoscaling policy on-the-fly. It is well-known that MDP suffers from the so-called ``curse of dimensionality'' problem when the state space is large \cite{sutton1998reinforcement}. The proposed PDS-based learning algorithm exploits the special structure of state transitions of the considered energy harvesting MEC system to conquer this challenge, thereby significantly improving both the learning convergence speed and the run-time performance compared with conventional online reinforcement learning algorithms such as Q-learning \cite{sutton1998reinforcement}. The key to achieving this performance improvement is a decomposition of the (offline) value iteration and (online) reinforcement learning that allows many components of the algorithm to be learned in a batch manner.
  \item We prove the convergence of the proposed PDS-based learning algorithm and analytically characterize the structure of the resulting optimal policy. The optimal policy is proven to have a simple monotone structure: the power demand for the optimal joint offloading and autoscaling policy is non-decreasing in the amount of available battery power. This result enables easy implementation of the proposed algorithm in practical MEC applications.
  \item Extensive simulations are carried out to verify our analytical results and evaluate the performance of the proposed algorithm. The results confirm that our method can significantly improve the performance of the energy harvesting MEC system.
\end{itemize}

The rest of this paper is organized as follows. Section II discusses related works. Section III describes the system model. Section IV formulates the MDP problem. Section V develops the PDS-based learning algorithm. Section VI proves the convergence of the proposed algorithm and characterizes the structure of the optimal policy. Section VII evaluates the proposed method via systematic simulations. Section VIII concludes the paper.

\section{Related Work}
Mobile edge computing (MEC) has received an increasing amount of attention in recent years. The concept of MEC was proposed in 2014 as a new platform that provides IT and cloud-computing capabilities within the radio access network in close proximity to mobile subscribers \cite{patel2014mobile}. Initially, MEC refers to the use of BSs for offloading computation tasks from mobile devices. Recently, the definition of edge devices gets broader, encompassing any devices that have computing resources along the path between data sources and cloud data centers \cite{shi2016edge}. Fog computing \cite{chiang2016fog} is a related concept that refers to the same computing paradigm. The areas of Fog computing and MEC are overlapping and the terminologies are frequently used interchangeably. There exist significant disparities between MEC and mobile cloud computing (MCC). Compared with MCC, MEC has the advantages of achieving lower latency, saving energy, supporting context-aware computing, and enhancing privacy and security for mobile applications \cite{shi2016edge}. A central theme of many prior studies is offloading policy on the \emph{user} side, i.e., what/when/how to offload a user's workload from its device to the edge system or cloud (see \cite{mao2017mobile} and references therein). Depending on the type of tasks, offloading can be either binary or partial. Our work focuses on edge-side offloading and autoscaling, and hence is complementary to these studies on user-side offloading.

MEC servers are small-scale data centers and consume substantially less energy than the conventional cloud mega-scale data center \cite{mao2017mobile}. However, as MEC servers become more widely deployed, the system-wide energy consumption becomes a big concern. Therefore, innovative techniques for achieving green MEC is in much need. Off-grid renewable energy, such as solar radiation and wind energy, has recently emerged as a viable and promising power source for various IT systems thanks to the recent advancement of energy harvesting techniques \cite{sudevalayam2011energy,ulukus2015energy}. Compared with traditional grid energy which is normally generated by coal-fired power plants, employing renewable energy significantly reduces the amount of carbon emission. Moreover, the use of renewable energy sources eliminates the need of human intervention, which is difficult if not impossible for certain types of application scenarios where the devices are hard and dangerous to reach. However, designing green MEC powered by renewable energy is much more challenging compared to green communication systems \cite{chia2014data,USC_DynamicBaseStation_Switching_On_Off_TWireless_2013,Krishnamachari_USC_Energy_efficient_wireless_basestation_CommunicationsMagazine_2011} or green data center networks \cite{LinWiermanAndrewThereska}\cite{TaoHan_UNCC_LoadBalancingRAN_HybriedEnergy_ToN,ChaoLi_iSwitch_ISCA_2012_Li:2012:ICO:2337159.2337218,Goiri:2011:GSE:2063384.2063411} since the radio and computation resources have to be jointly managed, whereas prior research typically only considers one of the two decisions. For example, autoscaling (a.k.a., right-sizing) in data centers \cite{LinWiermanAndrewThereska} dynamically controls the number of active servers, but the control knob of offloading to the cloud is not available in the context of data centers. For energy harvesting mobile devices, a dynamic computation offloading policy was proposed in \cite{mao2016dynamic} using Lyapunov optimization techniques \cite{neely2010stochastic} based on both information of the wireless channel and energy. The focus of the present paper is on energy harvesting MEC systems and our solution is based on reinforcement learning.

Another study relevant to our work is \cite{deng2015towards}, which also studies workload allocation/offloading in a cloud-fog computing system. However, unlike our energy harvesting edge system, this paper considers a grid-powered system and focuses on a one-shot static optimization without addressing the temporal correlation among the offloading decisions across time (due to intermittent renewable energy and limited battery capacity). The present paper develops a foresighted resource management algorithm for energy harvesting MEC, which can operate optimally in time-varying and unknown environments by formulating a Markov decision process problem. To cope with unknown time-varying system dynamics, our learning algorithm employs a decomposition of (offline) value iteration and (online) reinforcement learning based on factoring the system dynamics into an \textit{a priori} known and an \textit{a priori} unknown component. A key advantage of our proposed algorithm is that it exploits the partial information of the edge computing system and the structure of the resource management problem, and thus it converges much faster than conventional reinforcement learning algorithms such as Q-learning \cite{sutton1998reinforcement}.

To the best knowledge of the authors, the conference version of this paper \cite{xu2016online} was the first to study resource management for energy harvesting MEC servers (see related discussions in a recent comprehensive survey paper \cite{mao2017mobile}). The present paper extents our findings in \cite{xu2016online}. Specifically, in addition to developing an efficient learning algorithm for the optimal offloading and autoscaling policy, we analytically characterize the structure of the learned optimal resource management policy and carry out extensive simulations to evaluate its performance.

\section{System Model}

As a major deployment method of mobile edge computing \cite{patel2014mobile}, we consider an edge system consisting of a base station and a set of edge servers, which are physically co-located and share the same power supply in the cell site.
\begin{table}
\center
\caption{Main Notations and Their Meanings}
\begin{tabular}{|c|c|}
  \hline
  symbol & meaning \\
  \hline
  $\lambda(t)$ & total workload arrival rate \\
  $\mu(t)$ & amount of locally processed workload\\
  $m(t)$ & number of active servers\\
  $c_{wi}(t)$ & wireless access and transmission delay cost \\
  $c_{lo}(t)$ & local processing delay cost \\
  $c_{off}(t)$ & offloading delay cost \\
  $c_{delay}(t)$ & total delay cost\\
  $c_{bak}(t)$ & backup power supply cost\\
  $d_{op}(t)$ & edge system operation power consumption\\
  $d_{com}(t)$ & edge system computing power consumption \\
  $d(t)$ & total energy consumption\\
  $g(t)$ & harvested green energy \\
  $e(t)$ & environment state \\
  $b(t)$ & battery level \\
  $h(t)$ & backhaul network congestion state\\
  $s(t)$ & system state\\
  $\tilde{s}(t)$ & post-decision system state\\
  $C(t)$ & normal value function\\
  $V(t)$ & post-decision value function\\
  \hline
\end{tabular}
\end{table}

\subsection{Workload model}
We consider a discrete-time model by dividing the operating period into time slots of equal length indexed by $t=0,1,...$, each of which has a duration that matches the timescale at which the edge device can adjust its computing capacity (i.e. number of active servers). We use $x \in \mathcal{L}$ to represent a location coordinate in the service area $\mathcal{L}$. Let $\lambda(x,t)$ represent the workload arrival rate in location $x$, and $\theta(x, t)$ be the wireless transmission rate between the base station and location $x$. Thus $\lambda(t) = \sum_{x\in\mathcal{L}} \lambda(x,t) \in [0, \lambda_{max}]$ is the total workload arrival rate at the edge system, where $\lambda_{max}$ is the maximum possible arrival rate. The system decides the amount of workload $\mu(t) \leq \lambda(t)$ that will be processed locally. The remaining workload $\nu(t) \triangleq \lambda(t) - \mu(t)$ will be offloaded to the cloud for processing. The edge system also decides at the beginning of the time slot the number of active servers, denoted by $m(t) \in [0, M] \triangleq \mathcal{M}$. These servers are used to process the local workload $\mu(t)$. Since changing the number of servers during job execution are difficult and in many cases impossible, we only allow determining the number of servers at the beginning of each time slot but not within the slot.

\subsection{Delay cost model}
The average utilization of the base station is $\rho(t) = \sum_x \lambda(x,t)/\theta(x,t)$,
which results in a total wireless access and transmission delay of $c_{wi}(t) = \sum_{x} \lambda(x,t)/[\theta(x,t)(1-\rho(t))]$ by following the literature and
modeling the base
station as a queueing system \cite{USC_DynamicBaseStation_Switching_On_Off_TWireless_2013}. Next we model the workload processing delay incurred at the edge servers.

For the local processed workload, the delay cost $c_{lo}(t)$ is mainly processing delay due to the limited computing capacity at the local edge servers. The transmission delay from the edge device to the local servers is negligible due to physical co-location. To quantify the delay performance of services, such as average delay and tail delay (e.g. 95th-percentile latency), without restricting our model to any particular performance metric, we use the general notion of $c_{lo}(m(t), \mu(t))$ to represent the delay performance of interest during time slot $t$. As a concrete example, we can model the service process at a server instance as an M/G/1 queue and use the average response time (multiplied by the arrival rate) to represent the delay cost, which can be expressed as $c_{lo}(m(t), \mu(t)) = \frac{\mu(t)}{m(t)\cdot \kappa-\mu(t)}$, where $\kappa$ is the service rate of each server.

For the offloaded workload, the delay cost $c_{off}(t)$ is mainly transmission delay due to network round trip time (RTT), which varies depending on the network congestion state. For modeling simplicity, the service delay at the cloud side is also absorbed
into the network congestion state. Thus, we model the network congestion state, denoted by $h(t)$, as an exogenous parameter and express it in terms of the RTT (plus cloud service delay) for simplicity. The delay cost is thus $c_{off}(h(t), \lambda(t), \mu(t)) = (\lambda(t) - \mu(t))h(t)$. The total delay cost is therefore
\begin{align}
&c_{delay}(h(t), \lambda(t), m(t), \mu(t)) \nonumber\\
= &c_{lo}(m(t), \mu(t)) + c_{off}(h(t), \lambda(t), \mu(t)) + c_{wi}(\lambda(t))
\end{align}

\subsection{Power model}
We interchangeably use power and energy, since energy consumption during each time slot is the product of (average) power and the duration of each time slot that is held constant in our model. The total power demand of the edge system in a time slot consists of two parts: first, basic operation and transmission power demand by edge devices (base station in our study); and second, computing power demand by
edge servers. The first part is independent of the offloading or the autoscaling policy, which is modeled as $d_{op}(\lambda(t)) = d_{sta} + d_{dyn}(\lambda(t))$
where $d_{sta}$ is the static power consumption and $d_{dyn}(\lambda(t))$ is the dynamic power consumption depending on the amount of total workload. The computing power demand depends on the number of active servers as well as the locally processed workload. We use a generic function $d_{com}(m(t), \mu(t))$, which is increasing in $m(t)$ and $\mu(t)$, to denote the computing power demand. The total power demand in time slot $t$ is therefore
\begin{align}
d(\lambda(t), m(t), \mu(t)) = d_{op}(\lambda(t)) + d_{com}(m(t), \mu(t))
\end{align}

To model the uncertainty of the green power supply, we assume that the green power budget, denoted by $g(t)$, is realized after the offloading and autoscaling decisions are made. Therefore, the decisions cannot utilize the exact information of $g(t)$. However, we assume that there is an environment state $e(t)$ which the system can observe and it encodes valuable information of how much green energy budget is anticipated in the current time slot. For instance, daytime in a good weather usually implies high solar power budget. Specifically, we model $g(t)$ as an i.i.d. random variable given $e(t)$, which obeys a conditional probability distribution $P_g(g(t)|e(t))$. Note that the environment state $e(t)$ itself may not be i.i.d.

\subsection{Battery model}
Batteries are used to balance the power supply and demand. In a solar+wind system, photovoltaic modules and wind turbines can combine their output to power the edge system and charge the batteries \cite{li2015towards}. When their combined efforts are insufficient, batteries take over to ensure steady operation of the edge system. We denote the battery state at the beginning of time slot $t$ by $b(t) \in [0, B] \triangleq \mathcal{B}$ (in units of power) where $B$ is the battery capacity. For system protection reasons, the battery unit has to be disconnected from the load once its terminal voltage is below a certain threshold for charging. We map $b(t) = 0$ to this threshold voltage to ensure basic operation of the system.

\begin{figure}
  \centering
  \includegraphics[width=0.4\textwidth]{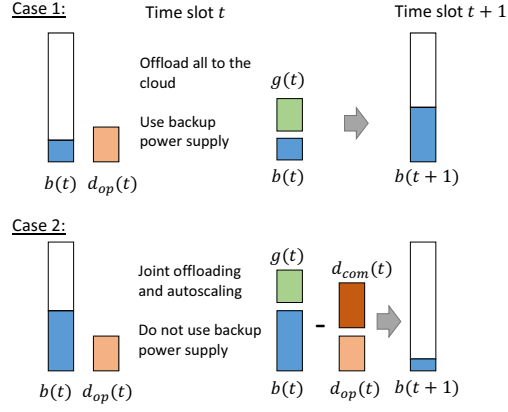}\\
  \caption{Battery state dynamics. Case 1: current battery cannot support basic operation and thus, backup power supply is invoked. Case 2: current battery can support basic operation. }\label{battery}
\end{figure}

Since green power budget is unpredictable and hence unknown at the beginning of time slot $t$, the edge system uses a conservative policy which satisfies $d_{com}(m(t), \mu(t)) \leq \max\{b(t) - d_{op}(\lambda(t)),0\}$ to avoid activating backup power supply by making offloading and autoscaling decisions.
\begin{itemize}
  \item When $d_{op}(\lambda(t)) \geq b(t)$, $d_{com}(\lambda(t), m(t), \mu(t))$ must be zero, which means that the edge system offloads all workload to the cloud if the existing battery level cannot even support the basic operation and transmission in the current slot. Moreover, the backup power supply (e.g. diesel generator) will be used to maintain basic operation for the slot. The cost due to activating the backup power supply is $c_{bak}(t) = \phi\cdot d_{op}(\lambda(t))$ where $\phi > 0$ is a large constant representing the large cost due to using the backup power supply. The next time slot battery state then evolves to $b(t+1) = b(t) + g(t)$.
  \item When $d_{op}(\lambda(t)) \leq b(t)$, the edge system may process part of the workload $\mu(t) \leq \lambda(t)$ at the local servers, but the power demand must satisfy $d_{com}(\lambda(t), m(t), \mu(t)) \leq b(t) - d_{op}(\lambda(t))$. Depending on the realized green power $g(t)$ and the computing power demand $d_{com}(\lambda(t), m(t), \mu(t))$, the battery is recharged or discharged accordingly:
\begin{itemize}
  \item  If $g(t) \geq d(\lambda(t), m(t), \mu(t))$, then the surplus $g(t) - d(\lambda(t), m(t), \mu(t))$ is stored in the battery until reaching its capacity $B$:
      \begin{equation}
      b(t+1) = \max\{b(t) + g(t) - d(\lambda(t), m(t), \mu(t)), B\}
      \end{equation}
  \item If $g(t) < d(\lambda(t), m(t), \mu(t))$, then the battery has to be discharged to cover the energy deficit $d(\lambda(t), m(t), \mu(t)) - g(t)$.
      \begin{equation}
      b(t+1) = b(t) + g(t) - d(\lambda(t), m(t), \mu(t))
      \end{equation}
      For simplicity, we will assume that there is no power loss either in recharging or discharging the batteries, noting that this can be easily generalized. We also assume that the batteries are not leaky. We model the battery depreciation cost in a time slot, denoted by $c_{battery}(t)$, using the amount of discharged power in this time slot since the lifetime discharging is often limited. Specifically,
        \begin{align*}
        c_{battery}(t) = \omega\cdot\max\{d(\lambda(t), m(t), \mu(t)) - g(t),0\}
        \end{align*}
      where $\omega > 0$ is the normalized unit depreciation cost.
\end{itemize}
\end{itemize}

\section{Problem Formulation}
In this section, we formulate the dynamic offloading and autoscaling problem as an online learning problem, in order to minimize the system cost. The system state is described by a tuple $s(t) \triangleq (\lambda(t), e(t), h(t), b(t))$, which is observable at the beginning of each time slot. Among the four state elements, the workload arrival rate $\lambda(t)$, the environment state $e(t)$, the backbone network state $h(t)$ are exogenous states which are independent of the offloading and autoscaling actions while the battery state $b(t)$ evolves according the to offloading and autoscaling actions as well as the renewable power realization. To make the stochastic control problem tractable, they are assumed to have finite value spaces and $\lambda(t), e(t), h(t)$ evolve as finite-state Markov chains. Specifically, let $P_\lambda(\lambda(t+1)|\lambda(t))$, $P_e(e(t+1)|e(t))$ and $P_h(h(t+1)|h(t))$ denote the transition matrices for $\lambda(t)$, $e(t)$ and $h(t)$, respectively. Similar assumptions have been made in existing literature, e.g. \cite{GuenterJainWilliams,zhang2014structure}. Importantly, all these probability distributions are unknown {\it a priori} to the edge system.

The stochastic control problem now can be cast into an MDP, which consists of four elements: the state space $\mathcal{S}$, the action space $\mathcal{A}$, the state transition probabilities $P_s(s(t+1)|s(t), a(t)), \forall s,s'\in\mathcal{S}, a\in \mathcal{A}$, and the cost function $c(s,a), \forall s, a$. We have already defined the state space. Next we introduce the other elements as follows.

\textbf{Actions}. Although the actual actions taken by the edge system are $\nu(t)$ (offloading) and $m(t)$ (autoscaling) in each time slot $t$, we will consider an intermediate action in the MDP formulation, which is the computing power demand in each time slot $t$, denoted by $a(t) \in \mathcal{A}$ where $\mathcal{A}$ is a finite value space $[0,1,....,B]$. We will see in a moment how to determine the optimal offloading and autoscaling actions based on this. As mentioned before, to maintain basic operation in the worst case, we require that $a(t) \leq \max\{b(t) - d_{op}(\lambda(t)), 0\}$. Thus, this condition determines the feasible action set in each time slot.

\textbf{State transitions}. Given the current state $s(t)$, the computing power demand $a(t)$ and the realized green power budget $g(t)$, the buffer state in the next time slot is
\begin{align}\label{buffer}
&b(t+1) = [b(t) + g(t)]_0^B, \textrm{if}~d_{op}(\lambda(t)) > b(t)\\
&b(t+1) = [b(t) - d_{op}(\lambda(t)) - a(t) + g(t)]_0^B, \textrm{otherwise} \nonumber
\end{align}
where $[\cdot]_0^B$ denotes $\max\{\min\{\cdot, B\}, 0\}$. The system then evolves into the next time slot $t+1$ with the new state $s(t+1)$. The transition probability from $s(t)$ to $s(t+1)$, given  $a(t)$, can be expressed as follows
\begin{align}
&P(s(t+1)|s(t),a(t)) \nonumber\\
=&P_\lambda(\lambda(t+1)|\lambda(t))P_e(e(t+1)|e(t))P_h(h(t+1)|h(t)) \nonumber\\
\times &\sum\limits_{g(t)}P_g(g(t)|e(t))\textbf{1}\{\zeta(t)\}
\end{align}
where $\textbf{1}\{\cdot\}$ is the indicator function and $\zeta(t)$ denotes the event defined by \eqref{buffer}. Notice that the state transition only depends on  $a(t)$ but not  the offloading or the autoscaling action. This is why we can focus on the computing power demand action $a(t)$ for the foresighted optimization problem.

\textbf{Cost function}. The total system cost is the sum of the delay cost, the battery depreciation cost and the backup power supply cost. If $d_{op}(\lambda(t)) > b(t)$, then the cost is simply
\begin{align}\label{oneslotcost1}
\tilde{c}(t) = c_{delay}(h(t), \lambda(t), 0, 0) + c_{bak}(\lambda(t))
\end{align}
since we must have $m(t) = 0$ and $\mu(t) = 0$. Otherwise, the realized cost given the realized green power budget $g(t)$ is
\begin{align*}
\tilde{c}(t) =  c_{delay}(h(t), \lambda(t), m(t), \mu(t))
+  \omega\cdot [a(t) - g(t)]_0^\infty
\end{align*}
Since the state transition does not depend on $\mu(t)$ or $m(t)$, they can be  optimized given $s(t)$ and $a(t)$ by solving the following myopic optimization problem
\begin{align}\label{joint}
\min_{\mu, m}~~ c_{delay}(h, \lambda, m, \mu)~~\textrm{s.t.}~~d(m, \mu) = a
\end{align}
Let $m^*(s, a)$ and $\mu^*(s, a)$ denote the optimal solution and $c^*_{delay}(s, a)$ the optimal value given $s$ and $a$. Therefore, the minimum cost in time slot $t$ given $s$ and $a$ is
\begin{align}\label{oneslotcost2}
\tilde{c}(t) = c^*_{delay}(s(t), a(t))+ \omega\cdot [a(t) - g(t)]_0^\infty
\end{align}
The expected cost is thus
\begin{align*}
c(s(t), a(t)) = c^*_{delay}(s(t), a(t)) +  E_{g(t)|e(t)}\omega\cdot [a(t) - g(t)]_0^\infty
\end{align*}

\textbf{Policy}. The edge system's computing power demand policy (which implies the joint offloading and autoscaling policy) in the MDP is a mapping $\pi: \Lambda\times \mathcal{E}\times \mathcal{H} \times \mathcal{B} \to \mathcal{A}$. We focus on optimizing the policy to minimize the edge system's expected long-term cost, which is defined as the expectation of the discounted sum of the edge device's one-slot cost: $C^\pi(s(0)) = \mathbb{E}\left(\sum\limits_{t=0}^\infty \delta^t c(s(t),a(t)) | s(0)\right)$ where $\delta < 1$ is a constant discount factor, which models the fact that a higher weight is put on the current cost than the future cost. The expectation is taken over the distribution of the green power budget, the workload arrival, the environment state and the network congestion state. It is known that in MDP, this problem is equivalent to the following optimization
\begin{align}
\min_\pi C^\pi(s), \forall s\in \mathcal{S}
\end{align}
Let $C^*(s)$ be the optimal discounted sum cost starting with state $s$. It is well-known that $\pi^*$ and $C^*(s)$ can be obtained by recursively solving the following set of Bellman equations
\begin{align}\label{valuefunction}
C^*(s) = \min_{a\in \mathcal{A}}\left(c(s,a) + \delta \sum\limits_{s'\in\mathcal{S}} P(s'|s,a)C^*(s')\right), \forall s
\end{align}
In the next section, we solve this problem using the idea of dynamic programming and online learning.

\section{Post-Decision State Based Online Learning}
If all the probability distributions were known a priori, then the optimal policy  could be solved using traditional algorithms for solving Bellman equations, e.g. the value iteration and the policy iteration \cite{sutton1998reinforcement}, in an offline manner. In the considered problem, all these probability distributions are unknown a priori and hence, these algorithms are not feasible. In this section, we propose an online reinforcement learning algorithm to derive the optimal policy $\pi^*$ on-the-fly. Our solution is based on the idea of post-decision state (PDS), which exploits the partially known information about the system dynamics and allows the edge system to integrate this information into its learning process to speed up learning. Compared with conventional online reinforcement learning algorithms, e.g. Q-learning, the proposed PDS based learning algorithm significantly improves its convergence speed and run-time performance.

In this rest of this section, we first define PDS, and then describe the proposed algorithm. Finally, we prove the convergence of the proposed algorithm.

\subsection{Post-Decision State}
\begin{figure}
  \centering
  \includegraphics[width=0.4\textwidth]{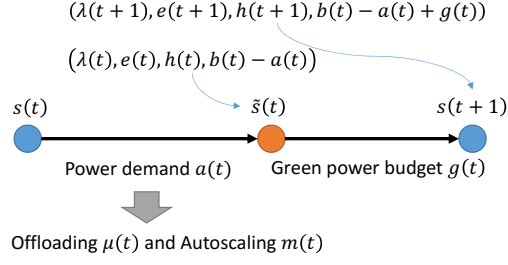}\\
  \caption{Illustration of Post-Decision State}\label{PDS}
\end{figure}

We first introduce the notion of PDS, which is the most critical idea of our proposed algorithm. In our problem, PDS is the intermediate system state after the edge system takes the computing power demand action $a(t)$ but before the green power budget $g(t)$ is realized. Figure \ref{PDS} illustrates the relationship between a normal state $s(t)$ and its PDS $\tilde{s}(t)$. Specifically, the PDS in time slot $t$, denoted by $\tilde{s}(t) \triangleq (\tilde{\lambda}(t), \tilde{e}(t), \tilde{h}(t), \tilde{b}(t))$, is defined as
\begin{align}\label{pds1}
\tilde{\lambda}(t) = \lambda(t),~~\tilde{e}(t) = e(t), ~~\tilde{h}(t) = h(t)
\end{align}
and
\begin{equation}\label{pds2}
\tilde{b}(t) = \left\{
\begin{array}{l}
b(t), \textrm{if}~d_{op}(\lambda(t))>b(t)\\
\max\{b(t) -d_{op}(\lambda(t)) - a(t),0\}, \textrm{otherwise}
\end{array}\right.
\end{equation}

As we can see, the post-decision workload state $\tilde{\lambda}(t)$, post-decision environment state $\tilde{e}(t)$ and post-decision network congestion state $\tilde{h}(t)$ remain the same because the computing power demand action $a(t)$ does not have a direct impact on these elements of the system state. The only element of the system state that may change is the battery state $b(t)$. However, it is important to notice that the post-decision battery state $\tilde{b}(t)$ is only a virtual state but not the real battery state. Given the definition of PDS, we further define the post-decision value function $V^*(\tilde{s})$ as follows:
\begin{align}\label{PD_valuefunction}
V^*(\tilde{s}) = \sum\limits_{s'\in \mathcal{S}}P(s'|\tilde{s})C^*(s'), \forall s
\end{align}
where the transition $P(s'|\tilde{s})$ between PDS and the next system state is now independent of the action:
\begin{align}
\tilde{P}(s|\tilde{s})
= &P_\lambda(\lambda|\tilde{\lambda})P_e(e|\tilde{e})P_h(h|\tilde{h}) \nonumber\\
&\times \sum_{g} P_g(g|\tilde{e})\textbf{1}\{b = \min\{\tilde{b} + g, B\}\}
\end{align}
For the ease of exposition, we refer to $s$ as the ``normal'' state and $C^*(s)$ as the ``normal'' value (cost) function, in order to differentiate with their post-decision counterparts.

By comparing \eqref{valuefunction} and \eqref{PD_valuefunction}, it is obvious that there is a deterministic mapping from the normal value function $C^*(s)$ to the post-state value function $V^*(\tilde{s})$ as follows by
\begin{align}\label{UV}
C^*(s) = \min_{a\in\mathcal{A}}(c(s,a) + \delta V^*(\tilde{s}))
\end{align}
The above equation shows that the normal value function $C^*(s)$ in each time slot is obtained from the corresponding post-decision value function $V^*(\tilde{s})$ in the same time slot, where $\tilde{s} = (\lambda, e, h, b)$ if $d_{op}(\lambda)) > b$ and $\tilde{s} = (\lambda, e, h, \max\{b-d_{op}(\lambda)-a, 0\})$ if $d_{op}(\lambda) \leq b$, by performing the minimization over the action $a$.

The advantages of using the PDS and post-decision value function are summarized as follows.

(1) In the normal state based Bellman's equation set \eqref{valuefunction}, the expectation over the possible workload arrival $\lambda$, the environment state $e$, network congestion state $h$, and green power budget $g$ has to be performed before the minimization over the possible energy demand actions $a$. Therefore, performing the minimization requires the knowledge of these dynamics. In contrast, in the PDS based Bellman equations \eqref{PD_valuefunction}, the expectation operation is separated from the minimization operation. If we can learn and approximate the post-decision value function $V^*(\tilde{s})$, then the minimization can be solved without any prior knowledge of the system dynamics.

(2) Given the energy demand action $a$, the PDS decomposes the system dynamics into an a priori unknown component, i.e. $\lambda$, $e$, $h$ and $g$ whose evolution is independent of $a$, and an a priori known component, i.e. the battery state evolution is partially determined by $a$. Importantly, $\lambda$, $e$, $h$ and $g$ are also independent of the battery state $b$. This fact enables us to develop a batch update scheme on the post-decision value functions, which can significantly improve the convergence speed of the proposed PDS based reinforcement learning.

\subsection{The algorithm}
The algorithm maintains and updates a set of variables in each time slot. These variables are
\begin{itemize}
\item The one slot cost estimate $\hat{c}^t(s,a), \forall (s, a) \in \mathcal{S}\times \mathcal{A}$.
\item The post-decision value function estimate $\hat{V}^t(\tilde{s}), \forall \tilde{s} \in \mathcal{S}$.
\item The normal value function estimates $\hat{C}^t(s),\forall s\in \mathcal{S}$.
\end{itemize}
The superscript $t$ is used to denote the estimates at the beginning of the time slot $t$. If these estimates are accurate, i.e. $\hat{c}^t(s,a) = c(s,a)$, $\hat{V}^t(\tilde{s}) = V^*(\tilde{s})$ and $\hat{C}^t(s) = C^*(s)$, then the optimal power demand policy is readily obtained by solving \eqref{UV}. Our goal is to learn these variables over time using the realizations of the system states and costs. The algorithm works as follows: (In each time slot $t$)

\textbf{Step 1}: Determine the empirically optimal computing power demand by solving
\begin{align}\label{actionselection}
a(t) = \min_a (\hat{c}^t(s(t),a) + \delta \hat{V}^t(\tilde{s}(t)))
\end{align}
using the current estimates $\hat{c}^t(s(t),a)$ and $\hat{V}^t(\tilde{s}(t))$, where for each $a$, $\tilde{s}(t)$ represents the corresponding PDS. Given this power demand, the corresponding optimal offloading and autoscaling actions are determined as $\mu(t) = \mu^*(s(t), a(t))$ and $m(t) = m^*(s(t), a(t))$ based on the solution of \eqref{joint}.

After the green power $g(t)$ is harvested and hence the current slot cost $\tilde{c}(t)$ is realized according to \eqref{oneslotcost2}, the battery state evolves to $b(t+1)$ according to \eqref{buffer}.

Steps 2 through 4 update the estimates.

\textbf{Step 2}: Batch update $\hat{c}^t(s, a)$ for any action $a$ and any state $s = (\lambda,e, h, b)$ such that $e$ is the same as the current slot environment state $e(t)$ using the realized green power budget $g(t)$ according to
\begin{align}\label{batchcost}
\hat{c}^{t+1}(s,a) = (1-\rho^t) \hat{c}^{t}(s,a) + \rho^t c(s, a, g(t))
\end{align}
where $\rho^t$ is the learning rate factor that satisfies $\sum_{t=0}^\infty \rho^t = \infty$ and $\sum\limits_{t=0}^\infty (\rho^t)^2 < \infty$. For all other action-state pair, $\hat{c}^{t+1}(s,a) = \hat{c}^t(s,a)$. We can do this batch update because the green power budget $g(t)$ depends only on the environment state $e(t)$ but not on other states or actions.

\textbf{Step 3}: Batch update the normal value function estimate for any state $s = (\lambda, e, h, b)$ such that $e = e(t)$ according to
\begin{align}\label{batchvalue}
\hat{C}^{t+1}(s) = \min_{a\in\mathcal{A}}(\hat{c}^{t+1}(s,a) + \delta \hat{V}^t(\tilde{s}))
\end{align}
The normal value function estimates for the remaining states are unchanged. The reason why we can do this batch update is the same as that in Step 2.

\textbf{Step 4}: Batch update the post-decision value function estimate for any $\tilde{s}\in \tilde{\mathcal{S}}$ such that $\tilde{\lambda} = \tilde{\lambda}(t)$, $\tilde{e} =\tilde{e}(t)$ and $\tilde{h} = \tilde{h}(t)$ according to
\begin{align}\label{batchpdvalue}
\hat{V}^{t+1}(\tilde{s}) = (1-\alpha^t) \hat{V}^{t}(\tilde{s}) + \alpha^t \hat{C}^{t+1}(s)
\end{align}
where $s = (\lambda, e, h, b)$ satisfies $\lambda = \lambda(t+1)$, $e = e(t+1)$, $h = h(t+1)$ and $b = \min\{\tilde{b} + g(t), B\}$. In this way, we update not only the currently visited PDS $\tilde{s}(t)$ but all PDS with common $\tilde{\lambda}(t)$, $\tilde{e}(t)$ and $\tilde{h}(t)$. This is because the temporal transition of $\lambda, e, h$ is independent of of the battery state $b$ and the green power budget realization follows the same distribution since the environment state $e$ is the same for these states.

\begin{algorithm}
\caption{Online Learning for Joint Offloading and Autoscaling}
\begin{algorithmic}[1]
\State \textbf{Initialize}: $\hat{c}^0(s,a) = 0, \forall (s, a)$, $\hat{V}^0(\tilde{s}) = 0, \forall \tilde{s}$, $\hat{C}^0(s) = 0, \forall s$.
\For{every time slot $t$}
\State Observe current state $s(t)$
\State Determine power demand $a(t)$ by solving \eqref{actionselection}
\State Determine offloading and autoscaling actions $\mu(t)$ and $m(t)$ by solving \eqref{joint}
\State Compute the post-decision state $\tilde{s}^t$ according to \eqref{pds1} and \eqref{pds2}
\State (The green power budget $g(t)$ is realized.)
\State Batch update $\hat{c}^t(s, a)$ according to \eqref{batchcost}
\State Batch update $\hat{C}^t(s)$ according to \eqref{batchvalue}
\State Batch update $\hat{V}^t(\tilde(s))$ according to \eqref{batchpdvalue}
\EndFor
\end{algorithmic}
\end{algorithm}

\section{Algorithm Analysis}
\subsection{Convergence of the PDS learning algorithm}
We first prove the convergence of our algorithm.
\begin{theorem}
The PDS based online learning algorithm converges to the optimal post-decision value function $V^*(\tilde{s}), \forall \tilde{s}$ when the sequence of learning rates $\rho^t$ satisfies $\sum_{t=0}^\infty \rho^t = \infty$ and $\sum\limits_{t=0}^\infty (\rho^t)^2 < \infty$.
\end{theorem}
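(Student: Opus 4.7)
The plan is to reduce the convergence claim to a standard stochastic approximation theorem for asynchronous iterations of a contraction mapping (e.g. Tsitsiklis 1994, or the Jaakkola--Jordan--Singh lemma used in the original Q-learning proof), after recasting the algorithm in operator form around the post-decision state.

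First I would handle the one-slot cost estimates. Because $g(t)$ is conditionally i.i.d.\ given $e(t)$, and Step 2 updates $\hat{c}^{t+1}(s,a)$ only for states whose environment component equals the currently observed $e(t)$, the recursion $\hat{c}^{t+1}(s,a) = (1-\rho^t)\hat{c}^{t}(s,a) + \rho^t c(s,a,g(t))$ is, for each fixed $(s,a)$, a Robbins--Monro averaging of i.i.d.\ samples of $c(s,a)$ along the (random) subsequence of slots in which $e(t)$ equals the environment component of $s$. Assuming every environment state is visited infinitely often (which must either be assumed as an ergodicity/exploration hypothesis or enforced, e.g., by an $\epsilon$-greedy perturbation of \eqref{actionselection}), the $\rho^t$ restricted to that subsequence still satisfies the Robbins--Monro conditions, giving $\hat{c}^t(s,a)\to c(s,a)$ almost surely.

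Next I would define the PDS Bellman operator $\mathcal{T}$ on bounded functions over $\tilde{\mathcal{S}}$ by
\begin{align*}
(\mathcal{T}V)(\tilde{s}) \;=\; \sum_{s'\in\mathcal{S}} \tilde{P}(s'\mid \tilde{s})\,\min_{a\in\mathcal{A}}\bigl(c(s',a) + \delta V(\tilde{\varphi}(s',a))\bigr),
\end{align*}
where $\tilde{\varphi}(s',a)$ is the deterministic PDS obtained from $(s',a)$ via \eqref{pds1}--\eqref{pds2}. Combining \eqref{UV} with \eqref{PD_valuefunction} shows that $V^*$ is the unique fixed point of $\mathcal{T}$, and a standard sup-norm argument gives $\|\mathcal{T}V_1-\mathcal{T}V_2\|_\infty \le \delta\|V_1-V_2\|_\infty$, so $\mathcal{T}$ is a $\delta$-contraction. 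The batch update \eqref{batchpdvalue}, taken together with \eqref{batchvalue}, can then be rewritten as
\begin{align*}
\hat{V}^{t+1}(\tilde{s}) \;=\; (1-\alpha^t)\hat{V}^{t}(\tilde{s}) + \alpha^t\bigl[(\mathcal{T}\hat{V}^t)(\tilde{s}) + w^t(\tilde{s}) + \varepsilon^t(\tilde{s})\bigr],
\end{align*}
where $w^t(\tilde{s})$ is the zero-mean noise coming from using a single next-state sample $(\lambda(t+1),e(t+1),h(t+1),\min\{\tilde{b}+g(t),B\})$ in place of the full expectation (its zero-mean property uses exactly the PDS decomposition: given $\tilde{s}$ with $\tilde{e}=\tilde{e}(t)$, the conditional law of the next normal state is the one appearing in $\tilde{P}(\cdot\mid\tilde{s})$), and $\varepsilon^t(\tilde{s})$ captures the error from using $\hat{c}^{t+1}$ instead of $c$, which tends to zero by the first step.

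I would then invoke the asynchronous stochastic approximation theorem for contraction mappings: with learning rates $\alpha^t$ (effectively $\rho^t$ per the hypothesis) satisfying $\sum_t \alpha^t=\infty$, $\sum_t (\alpha^t)^2<\infty$ on the update-subsequence for each $\tilde{s}$, with $\mathcal{T}$ a sup-norm $\delta$-contraction, with $w^t$ a martingale-difference noise of bounded conditional variance, and with $\varepsilon^t\to0$, one obtains $\hat{V}^t\to V^*$ almost surely. The batch updates are not an obstacle here: they only enlarge the set of PDS indices whose update counter advances at slot $t$, which helps the per-coordinate Robbins--Monro condition rather than hurting it, and the sample used is statistically valid for every batched coordinate precisely because the unknown exogenous transitions $P_\lambda, P_e, P_h, P_g$ do not depend on $\tilde{b}$.

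The main obstacle I expect is the asynchrony/coverage bookkeeping: the clean statement of the theorem requires every $\tilde{s}$ to be updated infinitely often with learning-rate tails satisfying the Robbins--Monro conditions, which in turn requires an exploration assumption on the underlying Markov chain of exogenous states and an argument that the greedy action rule still visits every battery level. I would address this either by imposing an irreducibility/exploration assumption explicitly or by switching the action-selection rule in \eqref{actionselection} to a vanishing-exploration variant; once that is in place the rest of the argument is routine stochastic approximation.
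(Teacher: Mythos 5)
Your proposal is correct and rests on the same core fact as the paper's proof --- the PDS update is a stochastic-approximation iteration whose mean field is a sup-norm $\delta$-contraction with unique fixed point $V^*$ --- but the technical route differs and is in fact more complete. The paper disposes of the theorem in a few lines via Borkar's ODE method: it defines the map $F$ collecting $F_{\tilde{s}}(V)=\min_a\bigl(c(s,a)+\delta V(\tilde{s})\bigr)$, asserts it is a maximum-norm $\delta$-contraction, and concludes asymptotic stability of the equilibrium of $\dot{V}=F(V)-V$, leaving the expectation over $\tilde{P}(\cdot\mid\tilde{s})$, the noise structure, the convergence of the cost estimates $\hat{c}^t$, and the visitation requirements entirely implicit. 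You instead use the discrete asynchronous stochastic-approximation theorem for contraction maps (Tsitsiklis/Jaakkola--Jordan--Singh), and you make explicit the three things the paper glosses over: (i) your operator $\mathcal{T}$ correctly includes the expectation over the exogenous transition from the PDS, which is the actual mean field of update \eqref{batchpdvalue} (the paper's written $F$ omits it); (ii) the coupling with Step 2 is handled as a vanishing bias $\varepsilon^t$ once $\hat{c}^t\to c$ by Robbins--Monro averaging of the conditionally i.i.d.\ samples of $g$; (iii) the martingale-difference property of $w^t$ and the per-coordinate stepsize conditions. The paper's ODE route buys brevity by citation; yours buys a self-contained argument with the hypotheses on the table. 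One refinement: your residual worry about exploration is slightly misplaced. Because Steps 2--4 batch-update over \emph{all} actions and \emph{all} battery levels sharing the current exogenous components, no $\epsilon$-greedy perturbation of \eqref{actionselection} is needed and no argument about the greedy policy visiting every battery level is required --- this is exactly the advantage the PDS decomposition is designed to deliver. What is genuinely needed (and what the paper never states) is recurrence of the exogenous chain $(\lambda,e,h)$, e.g.\ irreducibility of the finite-state chains $P_\lambda,P_e,P_h$, so that every batch of coordinates is refreshed infinitely often with stepsize tails satisfying the Robbins--Monro conditions; action randomization cannot substitute for this since those states are action-independent.
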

\begin{proof}
The proof follows \cite{zhang2014structure}. For each PDS $\tilde{s}$, we define a function on its value function as follows:
\begin{align}
F_{\tilde{s}}(V) = \min_{a\in\mathcal{A}}(c(s,a) + \delta V(\tilde{s}))
\end{align}
where $s$ and $a$ are such that, $\lambda = \tilde{\lambda}$, $e = \tilde{e}$, $h = \tilde{h}$ and $b = \tilde{b} - a$. Thus, for any value of $V(\tilde{s})$, $F_{\tilde{s}}(V)$  maps to a real number. Based on this, we define $F: \mathbb{R}^{|\mathcal{S}|} \to \mathbb{R}^{|\mathcal{S}|}$ be a mapping which collects $F_{\tilde{s}}$ for all $\tilde{s} \in \mathcal{S}$. It is proven in \cite{borkar2000ode} that the convergence of our proposed algorithm is equivalent to the convergence of the associated ordinary differential equation (O.D.E.):
\begin{align}
\dot{V} = F(V) - V
\end{align}
Since the map $F: \mathbb{R}^{|\mathcal{S}|} \to \mathbb{R}^{|\mathcal{S}|}$ is a maximum norm $\delta$-contraction, the asymptotic stability of the unique equilibrium point of the above O.D.E. is guaranteed \cite{bertsekas1995dynamic}. This unique equilibrium point corresponds to the optimal post-decision value function $V^*(\tilde{s}), \forall \tilde{s} \in \mathcal{S}$.
\end{proof}

Because $C^*(s), \forall s$ is a deterministic function of $V^*(\tilde{s}), \forall \tilde{s}$, it is straightforward that the PDS based online learning algorithm also converges to $C^*(s), \forall s$. Therefore, we prove that the edge system is able to learn the optimal power demand policy and hence the optimal offloading and autoscaling policies using the proposed algorithm.

\subsection{Structure of the Optimal Policy}
Next, we characterize the structure of the optimal policy. First, we show that the one-slot cost function is convex in the power demand action.
\begin{lemma}
Assume that both $c_{lo}(m, \mu)$ and $d_{com}(m, \mu)$ are jointly convex in $(m, \mu)$ for any given $s$, then the one-slot cost function $c(s, a)$ is convex in $a$ for any given $s$.
\end{lemma}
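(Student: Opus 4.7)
My plan is to exploit the additive decomposition
\[
c(s,a) \;=\; c^*_{delay}(s,a) \;+\; \mathbb{E}_{g(t)|e(t)}\bigl[\omega\,[a-g(t)]_0^\infty\bigr],
\]
and to show each summand is convex in $a$. The battery-depreciation term is the straightforward part: for every fixed realization $g$, the ``hockey-stick'' function $a \mapsto \max\{a-g,0\}$ is convex in $a$, and taking the conditional expectation over $g$ (a linear operation) preserves convexity. So the second summand is convex in $a$ for any $s$.

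The real work lies in $c^*_{delay}(s,a)$. Since $c_{wi}(\lambda)$ contributes an $a$-independent constant, it suffices to show convexity of
\[
f(a) \;:=\; \min_{m,\mu}\bigl\{\phi(m,\mu) : d_{op}(\lambda)+d_{com}(m,\mu)=a,\ m\in\mathcal{M},\ \mu\le\lambda\bigr\},
\]
where $\phi(m,\mu) := c_{lo}(m,\mu) + (\lambda-\mu)h$. Observe that $\phi$ is jointly convex in $(m,\mu)$ because $c_{lo}$ is so by hypothesis and $(\lambda-\mu)h$ is affine in $\mu$. My plan is to invoke the infimal projection theorem for convex functions: if $\Phi(m,\mu,a)$ is jointly convex and $\Omega\subseteq\{(m,\mu,a)\}$ is convex, then $a \mapsto \inf\{\Phi(m,\mu,a):(m,\mu,a)\in\Omega\}$ is convex. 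I instantiate $\Phi=\phi$ (trivially extended to depend on $a$) and take $\Omega$ to be the \emph{inequality}-constrained feasible set
\[
\Omega := \bigl\{(m,\mu,a) : d_{op}(\lambda)+d_{com}(m,\mu)\le a,\ m\in\mathcal{M},\ \mu\le\lambda\bigr\},
\]
which is convex because the first constraint cuts out the epigraph of the jointly convex function $d_{op}(\lambda)+d_{com}(m,\mu)-a$ (using the hypothesized convexity of $d_{com}$), and the remaining constraints are a box and a half-plane.

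The delicate step, and in my view the main obstacle, is bridging from the inequality-constrained infimum (which the infimal projection theorem yields directly) to the equality-constrained $f(a)$ of the original formulation. The sub-level set of a convex function is convex, but its level set generally is not, so the theorem does not apply to the equality constraint out of the box. I would close this gap by exploiting the monotonicity already built into the MEC model of Section III: $d_{com}$ is stipulated to be increasing in $m$, while $c_{lo}$ is naturally non-increasing in $m$ (more active servers cannot worsen local processing delay, as in the running M/G/1 example $\mu/(m\kappa-\mu)$). Any feasible $(m,\mu)$ with slack $d_{op}+d_{com}(m,\mu)<a$ can thus be replaced by $(m',\mu)$ with $m'>m$ that tightens the constraint to equality without increasing $\phi$; the equality- and inequality-constrained infima therefore coincide, and convexity of $c(s,a)$ in $a$ follows by summing the two convex terms.
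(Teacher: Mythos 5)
Your proposal is correct in substance but takes a genuinely different route from the paper's. The paper argues directly from the definition of convexity: it picks the optimal pairs $(m_1^*,\mu_1^*)$, $(m_2^*,\mu_2^*)$ at two demands $a_1,a_2$, uses joint convexity of $c_{delay}$ and of $d$ to show the convex combination $(m_\lambda,\mu_\lambda)$ meets the equality constraint at some $\tilde a\le \lambda a_1+(1-\lambda)a_2$, and then passes from $\tilde a$ to $a_\lambda$ by asserting (without proof) that the optimal delay $c^*_{delay}(s,\cdot)$ is non-increasing in the power demand. You instead cite the infimal-projection (partial minimization) theorem for the inequality-relaxed problem and then argue the relaxation is tight. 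The two routes hinge on the same hidden ingredient: that surplus power can always be absorbed by activating more servers without increasing delay. The paper uses this silently in the step ``$c_{delay}(m^*,\mu^*)\ge c_{delay}(m^*_\lambda,\mu^*_\lambda)$''; you make it explicit through your exchange argument, at the cost of invoking assumptions not in the lemma's hypotheses (that $c_{lo}$ is non-increasing in $m$ and $d_{com}$ is continuous and increasing in $m$) --- but the paper's implicit monotonicity step needs exactly the same facts, so this is a difference in candor rather than in logical strength. One residual corner case touches both proofs equally: if even $m'=M$ leaves slack, i.e. $d_{op}(\lambda)+d_{com}(M,\mu)<a$, equality cannot be restored by raising $m$ alone and the equality- and inequality-constrained optima need not coincide; both you and the paper implicitly restrict to demands $a$ in the range where the equality constraint is attainable by such an exchange. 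On balance, your route is more modular --- the convexity part is a one-line appeal to a standard convex-analysis theorem, and the model-specific content is isolated in the tightness argument --- whereas the paper's two-point computation avoids introducing the relaxed problem but leaves its monotonicity step unjustified.
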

\begin{proof}
Recall that $c(s, a) = c^*_{delay}(s, a) +  E_{g|e}\omega\cdot \max\{a - g, 0\}$. Since $a-g$ is linear in $a$, and the maximum of convex functions is still convex, it is obvious that $\max\{a - g, 0\}$ is convex in $a$. Since the expectation is just a weighted sum of convex functions, $E_{g|e}\omega\cdot \max\{a - g, 0\}$ is also convex in $a$. Now, if we can prove $c^*_{delay}(s, a)$ is convex in $a$, then the lemma is proved. Recall that $c^*_{delay}(s, a)$ is the solution to
\begin{align}
\min_{\mu, m}~~ c_{delay}(h, \lambda, m, \mu)~~\textrm{s.t.}~~d(m, \mu) = a
\end{align}
for the given $s$. Since $c_{delay}(h, \lambda, m, \mu) = c_{lo}(m, \mu) + c_{off}(\mu(t)) + c_{wi}$, $c_{lo}(m, \mu)$ is jointly convex in $(m, \mu)$ and $c_{off}(\mu(t))$ is linear in $\mu$, $c_{delay}(h, \lambda, m, \mu)$ is also jointly convex in $(m, \mu)$. Similarly, since $d(m, \mu) = d_{op} + d_{com}(m, \mu)$ and $d_{com}(m, \mu)$ is jointly convex in $(m, \mu)$, $d(m, \mu)$ is also jointly convex in $(m, \mu)$.

Consider two power demand actions $a_1, a_2$, let $(m_1^*, \mu_1^*)$ and $(m_2^*, \mu_2^*)$ be the corresponding optimal joint offloading and autoscaling actions. Clearly, we should have the constraint be binding for the optimal solution, i.e. $d(m_1^*, \mu_1^*) = a_1^*$ and $d(m_2^*, \mu_2^*) = a_2^*$. Now, we have, $\forall \lambda \in (0,1)$,
\begin{align}
&\lambda c^*_{delay}(a_1) + (1-\lambda) \lambda c^*_{delay}(a_2)\nonumber\\
=& \lambda c_{delay}(m_1^*, \mu_1^*) + (1-\lambda) \lambda c_{delay}(m_2^*, \mu_2^*)\\
\geq &c_{delay}(\lambda(m_1^*, \mu_1^*) + (1-\lambda)(m_2^*, \mu_2^*)) \triangleq c_{delay}(m_\lambda, \mu_\lambda)\nonumber
\end{align}
where we define $(m_\lambda, \mu_\lambda) \triangleq \lambda(m_1^*, \mu_1^*) + (1-\lambda)(m_2^*, \mu_2^*)$. Let $\tilde{a} = d(m_\lambda, \mu_\lambda)$ be the corresponding required power demand. Further, we let $(m^*, \mu^*)$ be the optimal offloading and autoscaling action for this $\tilde{a}$. Clearly, $c_{delay}(m_\lambda, \mu_\lambda) \geq c_{delay}(m^*, \mu^*)$. Due to the convexity of $d(m,\mu)$, we have
\begin{align}
&\tilde{a} = d(m_\lambda, \mu_\lambda) = d(\lambda(m_1^*, \mu_1^*) + (1-\lambda)(m_2^*, \mu_2^*))\nonumber\\
\leq &\lambda d(m_1^*, \mu_1^*) + (1-\lambda) d(m_2^*, \mu_2^*) \\
=& \lambda a_1 + (1-\lambda)a_2 \nonumber
\triangleq a_\lambda
\end{align}
Therefore, $c_{delay}(m^*, \mu^*) \geq c_{delay}(m^*_\lambda, \mu^*_\lambda)$, where $(m^*_\lambda, \mu^*_\lambda)$ are the optimal offloading and autoscaling actions for $a_\lambda$. Therefore, we have
\begin{align}
&\lambda c^*_{delay}(a_1) + (1-\lambda) \lambda c^*_{delay}(a_2) \nonumber\\
\geq& c_{delay}(m^*_\lambda, \mu^*_\lambda) = c^*_{delay}(\lambda a_1 + (1-\lambda)a_2)
\end{align}
This completes the proof of this lemma.
\end{proof}

Lemma 2 characterizes the shape of the optimal value function and post-state value function.
\begin{lemma}
Assume that $c(s, a)$ is convex in $a$ for any given $s$, then both $V^*(s)$ and $C^*(s)$ are non-increasing and convex in $b$ for any given $\lambda, e, h$.
\end{lemma}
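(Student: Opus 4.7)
The plan is to proceed by induction on the standard value-iteration iterates and then pass to the limit. Define $V^{(0)} \equiv 0$ and, for $n \geq 0$,
\begin{align*}
C^{(n+1)}(s) &= \min_{a}\bigl[\,c(s,a) + \delta V^{(n)}(\tilde{s}(s,a))\,\bigr], \\
V^{(n+1)}(\tilde{s}) &= \sum_{s'} \tilde{P}(s' \mid \tilde{s})\, C^{(n+1)}(s').
\end{align*}
Since the underlying Bellman operator is a $\delta$-contraction, $V^{(n)} \to V^*$ and $C^{(n)} \to C^*$ uniformly; both monotonicity and convexity in $b$ are preserved under pointwise limits, so it suffices to show every iterate has the property, which I will do by induction.

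The inductive step splits in two halves. First, assume $V^{(n)}(\lambda,e,h,\tilde{b})$ is non-increasing and convex in $\tilde{b}$ (the base case is trivial). Because $c(s,a)$ depends on $s$ only through $(\lambda,e,h)$, on the regime $b \geq d_{op}(\lambda)$ the PDS battery is $\tilde{b} = b - d_{op}(\lambda) - a$, and writing $u = b - d_{op}(\lambda)$ gives
\begin{equation*}
C^{(n+1)}(\lambda,e,h,b) = \min_{0 \leq \tilde{b} \leq u}\bigl[\,c(\lambda,e,h,u-\tilde{b}) + \delta V^{(n)}(\lambda,e,h,\tilde{b})\,\bigr].
\end{equation*}
After extending each summand by $+\infty$ outside its natural domain, this is the infimal convolution of two convex functions of a single variable and is therefore convex in $u$ (hence in $b$). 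Monotonicity on this regime follows because enlarging $u$ enlarges the feasible set: retaining the old optimal $a^*$ and taking $\tilde{b} = u - a^*$ uses the non-increasing property of $V^{(n)}$. On the complementary regime $b < d_{op}(\lambda)$ only $a = 0$ is feasible and $\tilde{b} = b$, so $C^{(n+1)}(s) = c(s,0) + \delta V^{(n)}(\lambda,e,h,b)$, which is a $b$-independent constant plus a non-increasing convex function of $b$.

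For the second half, assume $C^{(n+1)}(\lambda,e,h,b)$ is non-increasing and convex in $b$. The post-decision-to-normal battery transition is $b' = \min\{\tilde{b}+g, B\}$, which is non-decreasing and concave in $\tilde{b}$; the composition of a non-increasing convex function with a non-decreasing concave function is non-increasing convex, so $C^{(n+1)}(\lambda',e',h',\min\{\tilde{b}+g, B\})$ is non-increasing convex in $\tilde{b}$ for every realization of $(\lambda',e',h',g)$. A non-negative-weight convex combination preserves both properties, so the expectation defining $V^{(n+1)}$ inherits them.

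The main obstacle is the gluing at the breakpoint $b = d_{op}(\lambda)$, where the cost regime switches because of the backup penalty $c_{bak} = \phi\, d_{op}(\lambda)$: the two sub-arguments give convexity on each side separately, but the second-difference condition must also be verified across the switch. I would handle this by observing that the backup penalty induces a large downward jump in $C^{(n+1)}$ as $b$ crosses $d_{op}(\lambda)$ from below, while the differences on $b \geq d_{op}(\lambda)$ are controlled by the inf-convolution and cannot exceed the $a=0$ baseline slope in magnitude; since $\phi$ is a large constant, the left-difference at the breakpoint is at least as negative as every right-difference, which is exactly what convexity on the union requires. Taking $n\to\infty$ then transfers the properties from the iterates to $V^*$ and $C^*$, completing the proof.
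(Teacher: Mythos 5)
Your within-regime arguments are sound and essentially coincide with the paper's proof: the paper also proceeds by induction on the value-iteration iterates, propagates convexity through the recharge map $b \mapsto \min\{b+g,B\}$ by the same composition rule (non-increasing convex composed with concave, then a weighted sum over $g,\lambda,e,h$), and handles the minimization over $a$ by a convex-combination argument that your infimal-convolution phrasing compactly repackages. Your monotonicity argument (keep the old optimal action, note the feasible set grows and the continuation $V^{(n)}$ is non-increasing) is in fact cleaner than the paper's, which asserts that $c(s,a)$ is non-increasing in $a$ even though the depreciation term $\omega\,\mathbb{E}[a-g]^+$ is increasing in $a$.

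The genuine gap is precisely the step you flag as ``the main obstacle'' and then wave through: the gluing at $b=d_{op}(\lambda)$. Discrete convexity requires the first difference across the breakpoint to be simultaneously no larger than every difference to its right \emph{and} no smaller than every difference to its left; your argument addresses only the first half, and the second half is exactly what a large $\phi$ destroys. For $b<d_{op}(\lambda)$ the action is forced ($a=0$, $\tilde b=b$), so the differences there are $\delta[V^{(n)}(b+1)-V^{(n)}(b)]$, small in magnitude, whereas the crossing difference is roughly $-\phi\, d_{op}(\lambda)+\delta[V^{(n)}(0)-V^{(n)}(d_{op}(\lambda)-1)]$; if $V^{(n)}$ is nearly flat in $b$ (e.g.\ small $\delta$) this is about $-\phi\, d_{op}(\lambda)$, strictly below the preceding differences, so the glued function has a concave kink just before the breakpoint --- a gentle decrease followed by a steep drop is not convex, and increasing $\phi$ makes it worse, not better. (Monotonicity across the breakpoint likewise needs the quantitative bound $\phi\, d_{op}(\lambda)\ \ge\ \delta[V^{(n)}(0)-V^{(n)}(d_{op}(\lambda)-1)]$, not merely ``$\phi$ large'' qualitatively, though there largeness at least helps.) Note that the paper never attempts this gluing: its induction writes the continuation as $V_n(b-a^*)$ throughout, i.e.\ it implicitly works on the regime $b\ge d_{op}(\lambda)$ and ignores the backup branch, so what it actually supports is the structure on that regime. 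Your proposal is correct exactly where it coincides with the paper and breaks at the extra step you added; a repair would either restrict the claim to the regime $b\ge d_{op}(\lambda)$ as the paper implicitly does, or impose an explicit condition relating $\phi$, $\delta$ and the spread of $V^*$ under which the crossing difference can be sandwiched between the two one-sided difference sequences.
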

\begin{proof}
We first prove $C^*(s)$ is non-increasing. The optimal value functions satisfy
\begin{align}
C^*(s) = \min_{a\in\mathcal{A}}\left(c(s,a) + \delta \sum_{s'\in\mathcal{S}}P(s'|s,a)C^*(s')\right)
\end{align}
Consider two states $s$ and $s'$ that differs only in $b$ and $b'$, and assume $b < b'$. Let $a^*$ be the optimal action for $s$. Now consider an action $a' = b' - b + a^* > a^*$ for state $s'$. It is obvious that the second term on the right-hand side is identical in both cases since the transitions of $\lambda, e, h$ are independent of the battery state, and because $b'-a' = b-a^*$, the battery state transition is also the same. Because $c(s,a)$ is non-increasing in $a$, we have $c(s', a') \leq c(s, a^*)$. Therefore, by choosing $a'$ for $s'$, we have $C(s', a') \leq C^*(s)$. Realizing $C^*(s') \leq C(s', a')$ due to the minimization operation, we have $C^*(s') \leq C^*(s)$, thus proving $C^*(s)$ is non-increasing.

Next, we prove that $C^*(s)$ is convex by induction. The optimal value functions $C^*(s),\forall s$ can be solved by the value iteration algorithm
\begin{align}
C_{n+1}(s) = \min_{a\in\mathcal{A}}\left(c(s,a) + \delta \sum_{s'\in\mathcal{S}}P(s'|s,a)C_{n}(s')\right)
\end{align}
where the subscript $n$ and $n+1$ represent the $n$-th iteration and $(n+1)$-th iteration. It is well known that valuation iteration converges to the optimal solution, i.e. as $n\to \infty$, $C_{n}(s) \to C^*(s), \forall s$, starting from any initial value function $C_0(s), \forall s$. We initialize $C_0(s)$ to be convex in $b$ for any given $\lambda, e, h$. It is easy to see that $V^*(s)$ is also non-increasing since it is simply a weighted average of a bunch of value functions.

Suppose $C_n(s), \forall s$ are convex in $b$ for any given $\lambda, e, h$. Consider two battery states $b < b'$ and let the corresponding optimal action be $a^*$ and $a'^*$. Then we have
\begin{align}
C_{n+1}(b) = c(a^*) + \delta V_n(b-a^*) \label{c1}\\
C_{n+1}(b') = c(a'^*) + \delta V_n(b'-a'^*) \label{c2}
\end{align}
In the above equations, we omitted the state elements $\lambda, e, h$. Since we have assumed that $C_n(b), \forall s$ are convex in $b$. In addition, $\hat{b} = \min\{b+g, B\}$ is a concave function in $b$. Thus, by applying the results of composition (i.e. if $f$ is concave and $g$ is convex and non-increasing, then $h(x) = g(f(x))$ is convex.), we have $C_n(\min\{b+g, B\}), \forall s$ also convex. Therefore $V_n(b)$ is also convex in $b$ since it is a weighted sum of convex functions.

Now, combining \eqref{c1} and \eqref{c2} and using the convexity of $V_n(b)$ we have
\begin{align}
&\lambda C_{n+1}(b) + (1 - \lambda) C_{n+1}(b')\nonumber\\
=& \lambda c(a^*) + (1-\lambda) c(a'^*) \nonumber\\
&+ \delta[\lambda V_n(b-a^*) + (1-\lambda) V_n(b'-a'^*)]\nonumber\\
\geq & c(a_\lambda) + \delta V_n(b_\lambda - a_\lambda) \geq C_{n+1}(b_{\lambda})
\end{align}
where $b_\lambda = \lambda b + (1-\lambda) b'$ and $a_\lambda = \lambda a^* + (1-\lambda) a'^*$. This proves that $C_{n+1}(b)$ is also convex in $b$.
\end{proof}

Now, we are ready to prove the structural result of the optimal power demand policy.
\begin{theorem}
Assume that both $c_{lo}(m, \mu)$ and $d_{com}(m, \mu)$ are jointly convex in $(m, \mu)$ for any given $s$, then the optimal power demand policy is monotonically non-decreasing in $b$ for any given $\lambda, e, h$. That is, $\forall s, s'$ such that $\lambda = \lambda'$, $e = e'$, $h = h'$ and $b \leq b'$, then we have $\pi^*(s) \leq \pi(s')$.
\end{theorem}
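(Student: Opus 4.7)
The plan is to deduce the monotone-policy result from a standard monotone-comparative-statics (Topkis) argument on the Bellman objective, once the convexity of $V^*$ in the post-decision battery level has been secured by Lemmas 1 and 2.

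First, I would reduce the claim to verifying decreasing differences of the Bellman objective $Q(b,a) \triangleq c(s,a) + \delta V^*(\tilde{s}(s,a))$ in $(b,a)$, with $(\lambda,e,h)$ held fixed. Two simplifications make this tractable. (i) Inspection of $c(s,a) = c^*_{delay}(s,a) + \mathbb{E}_{g|e}[\omega[a-g]_0^\infty]$ shows that $c$ depends on $s$ only through $(\lambda,e,h)$ and on $a$, but not on $b$. (ii) Inside the feasible action set $a \in [0, b-d_{op}(\lambda)]$, the max-with-zero in \eqref{pds2} is inactive, so $\tilde{b}(b,a) = b - d_{op}(\lambda) - a$ is additively separable in $(b,a)$. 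The degenerate case $b < d_{op}(\lambda)$ forces $a=0$, so the monotonicity conclusion holds trivially against any $b' \geq b$.

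With these reductions, the key inequality to establish is
\begin{equation*}
Q(b',a') - Q(b',a) - Q(b,a') + Q(b,a) \leq 0
\end{equation*}
for $a < a'$ and $b < b'$. The $c$-contributions cancel by (i), and the remainder becomes a difference-of-differences of $V^*$ evaluated at four points of the form $b - d_{op}(\lambda) - a$. By Lemma 2, $V^*$ is convex in its last argument; applied with increment $a' - a$, convexity says precisely that $V^*(x) - V^*(x-(a'-a))$ is non-decreasing in $x$, which is exactly what the inequality demands. Decreasing differences of $Q$, combined with the ascending feasibility correspondence $b \mapsto [0, b - d_{op}(\lambda)]$, then yields by Topkis's monotonicity theorem a non-decreasing selection of $\pi^*(s) = \argmin_a Q(b,a)$ in $b$.

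The main obstacle is really a bookkeeping one: making sure the various $\max$ and $\min$ clippings (at $0$ in the PDS definition and at $B$ in the battery transition) do not wreck the convexity on which the Topkis step rests. For the PDS side, the clipping is inactive throughout the feasible set, as noted above. For the battery-transition side, convexity of $V^*(\tilde{b}) = \mathbb{E}[C^*(\min\{\tilde{b}+g,B\})]$ survives because the map $\tilde{b} \mapsto \min\{\tilde{b}+g,B\}$ is concave and $C^*$ is convex and non-increasing, so the composition is convex; this is precisely the step already carried out inside the induction of Lemma 2, which I would invoke. With this caveat handled, the Topkis argument is clean and the theorem follows.
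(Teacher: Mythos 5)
Your proposal is correct and follows essentially the same route as the paper: both reduce the claim to decreasing differences (subadditivity) of the state--action value $c(s,a)+\delta V^*(\tilde{s})$ in $(b,a)$, cancel the cost term since it does not depend on $b$, verify the resulting four-point inequality from the convexity of $V^*$ established in Lemmas 1 and 2, and then invoke a standard monotone-selection result (your Topkis step is the same machinery as the paper's citation of Puterman, Section 4.7). Your extra bookkeeping on the inactive clipping at $0$, the degenerate case $b<d_{op}(\lambda)$, and the $\min\{\cdot,B\}$ composition is a welcome refinement of details the paper leaves implicit, not a different argument.
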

\begin{proof}
We aim to prove that $C(b, a)$ is subadditive in the battery state $b$ and the power demand action $a$ for any given $\lambda, e, h$. This is to prove that $\forall b \leq b'$ and $a \leq a'$, we have
\begin{align}\label{subadditive}
C(b, a') - C(b, a) \geq C(b', a') - C(b', a)
\end{align}
If the above is true, then we can apply \cite{puterman2014markov} (Section 4.7) to show that $\pi(s)$ is non-decreasing in $b$.

Equation \eqref{subadditive} is equivalent to
\begin{align}
V^*(b - a') - V^*(b - a) \geq V^*(b' - a') - V^*(b' - a)
\end{align}
Let $b - a = \hat{b}$, $b'- a = \hat{b}'$ and $\Delta = a' - a$. Then the above becomes
\begin{align}
V^*(\hat{b} - \Delta) - V^*(\hat{b}) \geq V^*(\hat{b}' - \Delta) - V^*(\hat{b}')
\end{align}
which is true due to the fact that $V^*(b)$ is non-increasing and convex when $c_{lo}(m, \mu)$ and $d_{com}(m, \mu)$ are jointly convex in $(m, \mu)$ for any given $s$ (by Lemma 1 and Lemma 2). This completes the proof.
\end{proof}

\section{Simulation}
\subsection{Simulation Setup}
We consider each time slot as 15 minutes. The workload arrival space is set as $\Lambda=$\{10 units/sec, 20 units/sec, ..., 100 units/sec\}. The network congestion space is $\mathcal{H}=$\{20 ms/unit, 30 ms/unit, ..., 60 ms/unit\}. The environment state space is $\mathcal{E}=$\{Low, Medium, High\}. For each environment state, the green power will be realized according to a normal distrinution with different means: $g(t|e=\text{Low})\sim \mathcal{N}(200W,10^2)$, $g(t|e=\text{Medium})\sim \mathcal{N}(400W,10^2)$, $g(t|e=\text{High})\sim \mathcal{N}(600W,10^2)$. The battery capacity is set as $B=$2 kWh. The base station static power consumption is $d_{sta}=$ 300W. The maximum number of activated edge server is $M = 15$. The power consumption of each edge server is 150W. The maximum service rate of each edge server is 20 units/sec. Other important parameters are set as follows:  the normalized unit depreciation cost $\omega=0.01$, the cost coefficient of backup power supply $\phi=0.15$.

The proposed PDS-based learning algorithm is compared with three benchmark schemes:
\begin{itemize}
    \item \textbf{Q-learning}\cite{sutton1998reinforcement}: Q-learning is a famous model-free reinforcement learning technique for solving MDP problems. It has been proven that for any finite MDP problem, Q-learning eventually finds an optimal policy.
    \item \textbf{Myopic optimization}: this scheme ignores the temporal correlation between the system states and the decisions, and minimizes the cost function given the state in the current time slot by utilizing all available battery energy.
    \item \textbf{Fixed power}: this scheme uses a fixed computation power (whenever possible) for edge computing in each time slot.
\end{itemize}
\subsection{Run-time Performance Comparison}
\begin{figure}
	\centering\includegraphics[width=0.45\textwidth]{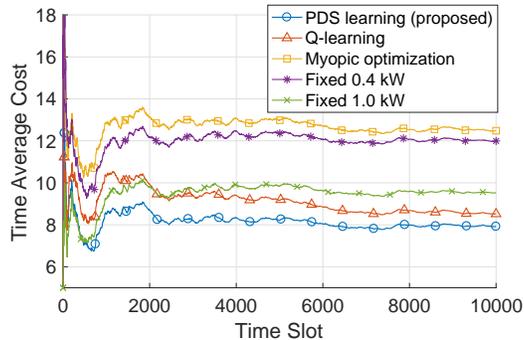}
	\caption{Run-time performance comparison}\label{time_average_cost}
\end{figure}

Figure \ref{time_average_cost} compares the run-time performance of our scheme with the three benchmark schemes for 10000 time slots. As can be seen, the proposed PDS-based learning algorithm incurs a significantly lower cost than all benchmark schemes.
\begin{itemize}
  \item Myopic optimization incurs a large time-average cost since it ignores the temporal correlation of decision making and frequently is forced to activate the backup power in the subsequent time slots.
  \item The fixed power scheme has tremendously different performance depending on the fixed value used, which implies that it is sensitive to system parameters. In Figure \ref{time_average_cost}, two fixed values (1.0kW and 0.4kW) are shown for illustrative purposes, where 1.0kW is the best fixed value found by our extensive simulations. Since the system dynamics is unknown a priori and may change over time, using a fixed computing power scheme may cause significant performance loss.
  \item The performance of Q-learning is much worse than our proposed PDS learning since it converges very slowly due to the large state space. Although there is a trend of declining in the time average cost, even after 10000 time slots, there is still a considerable performance gap compared with our scheme. On the other hand, our proposed PDS scheme converges very quickly.
\end{itemize}

\subsection{Learned Optimal Policy}
\begin{figure}
	\centering\includegraphics[width=0.45\textwidth]{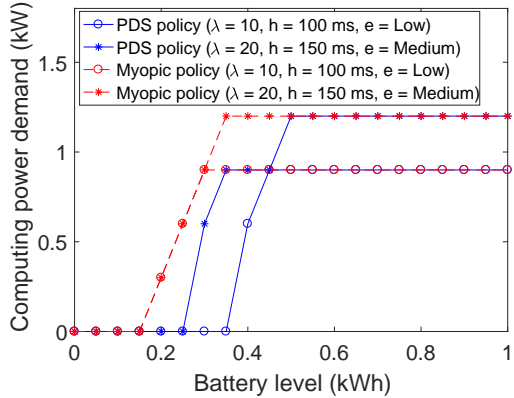}
	\caption{Computing power demand policy}\label{battery_computation}
\end{figure}

Figure \ref{battery_computation} further explains why the proposed algorithm outperforms the myopic solution by showing the learned optimal policy. When the workload demand is low and the network is not congested, the policy learned by the proposed algorithm tends to be conservative in using local computing power if the battery level is low. In this way, more power can be saved for future when the workload is high and the network congestion state degrades, thereby reducing the long-term system cost. On the other hand, the myopic policy ignores this temporal correlation, it activates local servers to process workload even if the battery level is not so high. As a result, even though it achieves slight improvement in the current slot, it wastes power for potentially reducing significant cost in the future. Figure \ref{battery_computation} also validates our theoretical results in Theorem 2 on the structure of the optimal policy: The optimal power demand increases in the battery level.

\subsection{Battery State Distribution}
\begin{figure}
	\centering	
	\subfigure[Battery state distribution]{\label{battery_dis_bar}
		\includegraphics[width=0.45\textwidth]{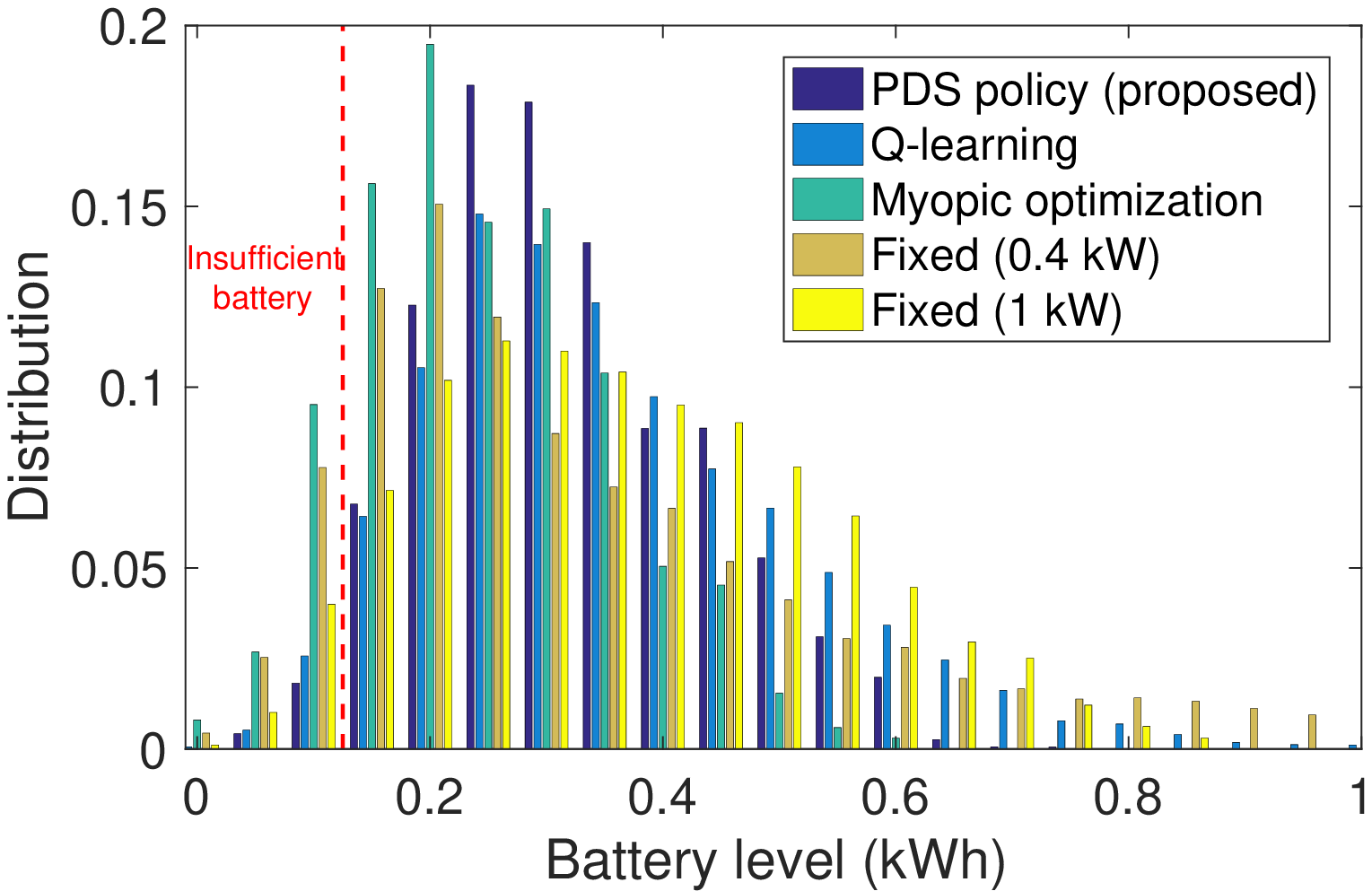}}	
	\subfigure[Battery state distribution (fitted)]{\label{battery_dis_curve}
		\includegraphics[width=0.45\textwidth]{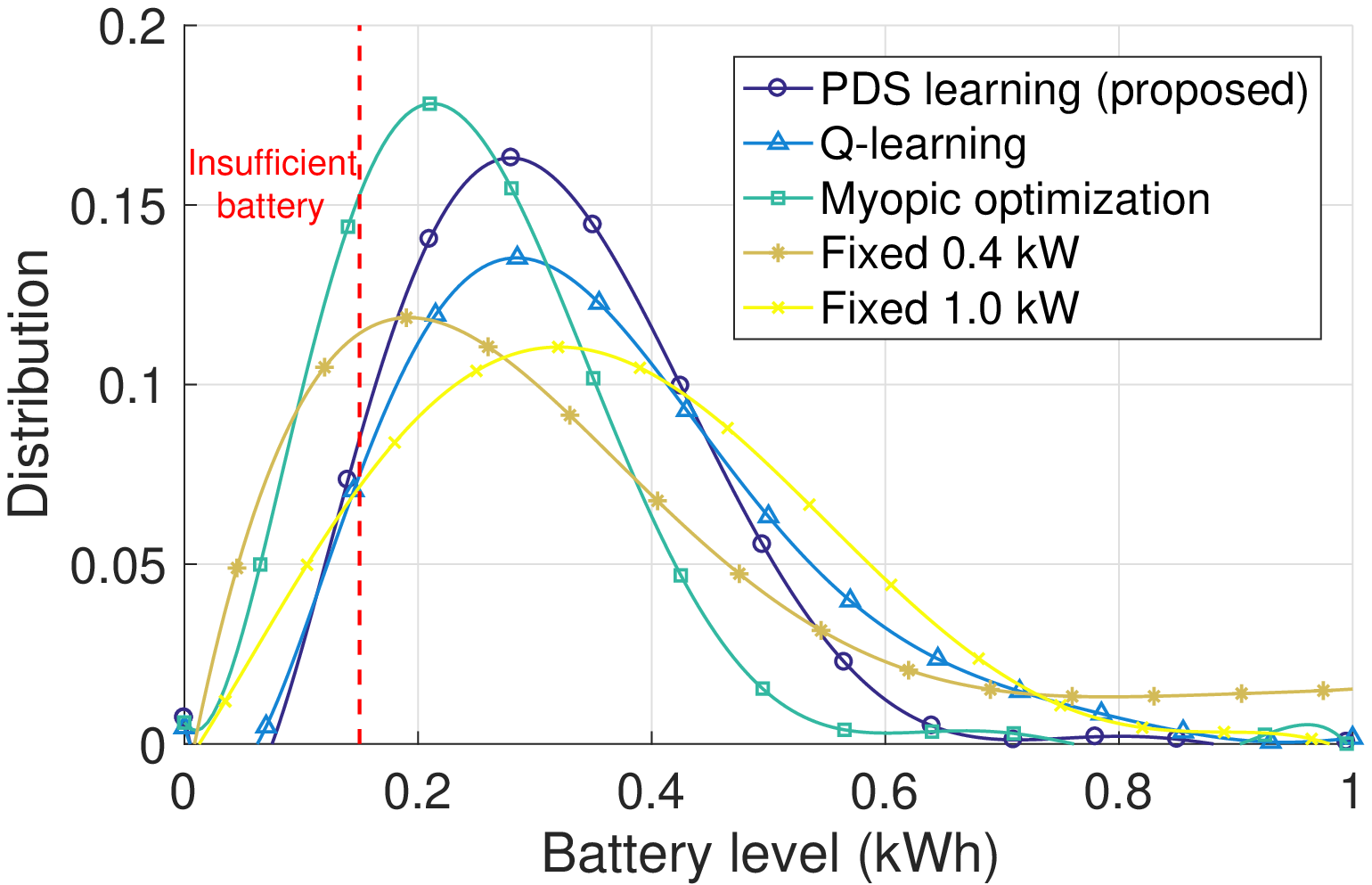}}
	\caption{Battery state distributions}
	\label{battery_dis}
\end{figure}

Figure \ref{battery_dis_bar} shows the distributions of the battery state over the simulated 10000 time slots in one representative simulation run for the various schemes, and Figure \ref{battery_dis_curve} shows the fitted curves (polynomial fit) for better inspection. As can be seen, Myopic optimization results in a large portion of time when the system is in the insufficient battery zone, incurring significant backup power costs. If a too small fixed power demand is used (e.g. 0.4kW), the battery state may spend a considerable amount of time in the high battery level zone (i.e. 0.7 -- 1kWh). This implies that much of the green power cannot be harvested and hence is wasted due to the limited battery capacity constraint. Moreover, using a smaller fixed power for computation does not guarantee that it has a smaller chance to get into the insufficient battery zone. This is because when the battery state is slightly higher than the level that can support basic operation, using a smaller fixed power can easily make battery state drop to the insufficient battery zone in the subsequent time slot whereas if a larger fixed power scheme is used, the system will decide to offload all workload to the cloud without using the local battery power. Although a proper fixed power demand is able to strike a decent balance, it does not well adapt to the changing system dynamics. The proposed PDS-based algorithm achieves the highest harvesting energy efficiency by keeping the battery at a relatively low state while above the insufficient level.

\subsection{Cost Composition}
\begin{figure}
	\centering	
	\subfigure[Run-time cost composition (PDS)]{\label{cost_comp_PDS}
		\includegraphics[width=0.45\textwidth]{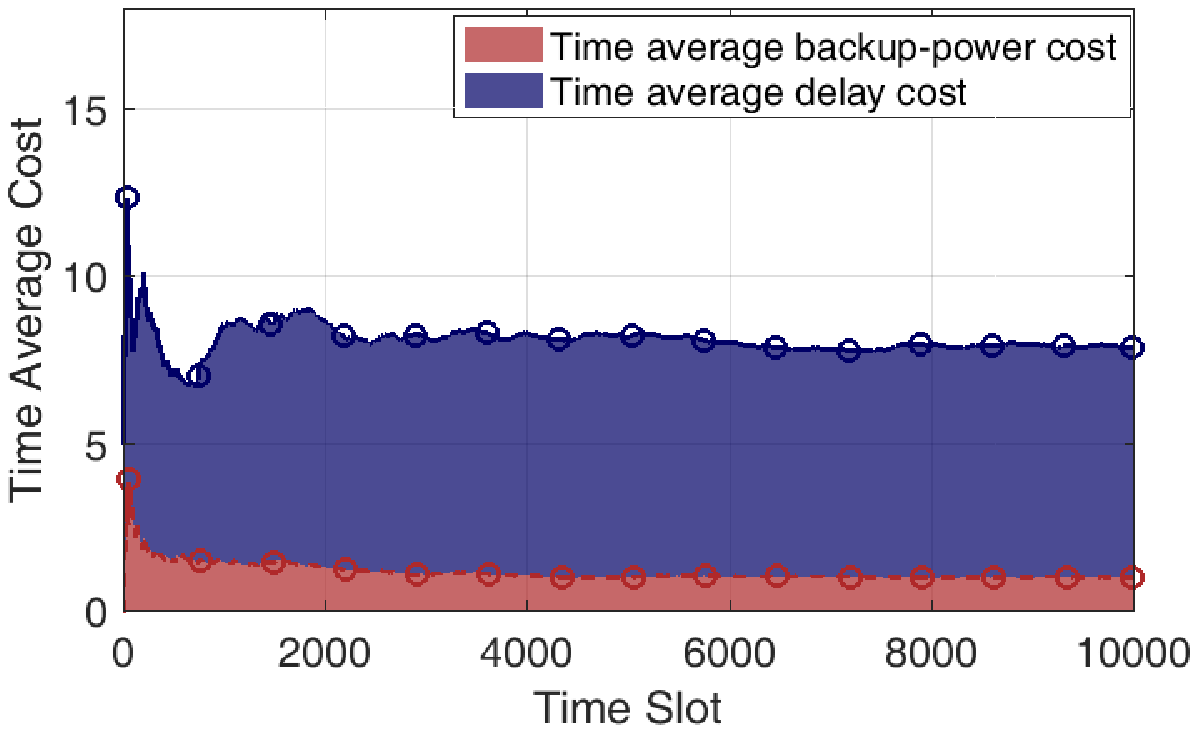}}	
	\subfigure[Run-time cost composition (Myopic)]{\label{cost_comp_myo}
		\includegraphics[width=0.45\textwidth]{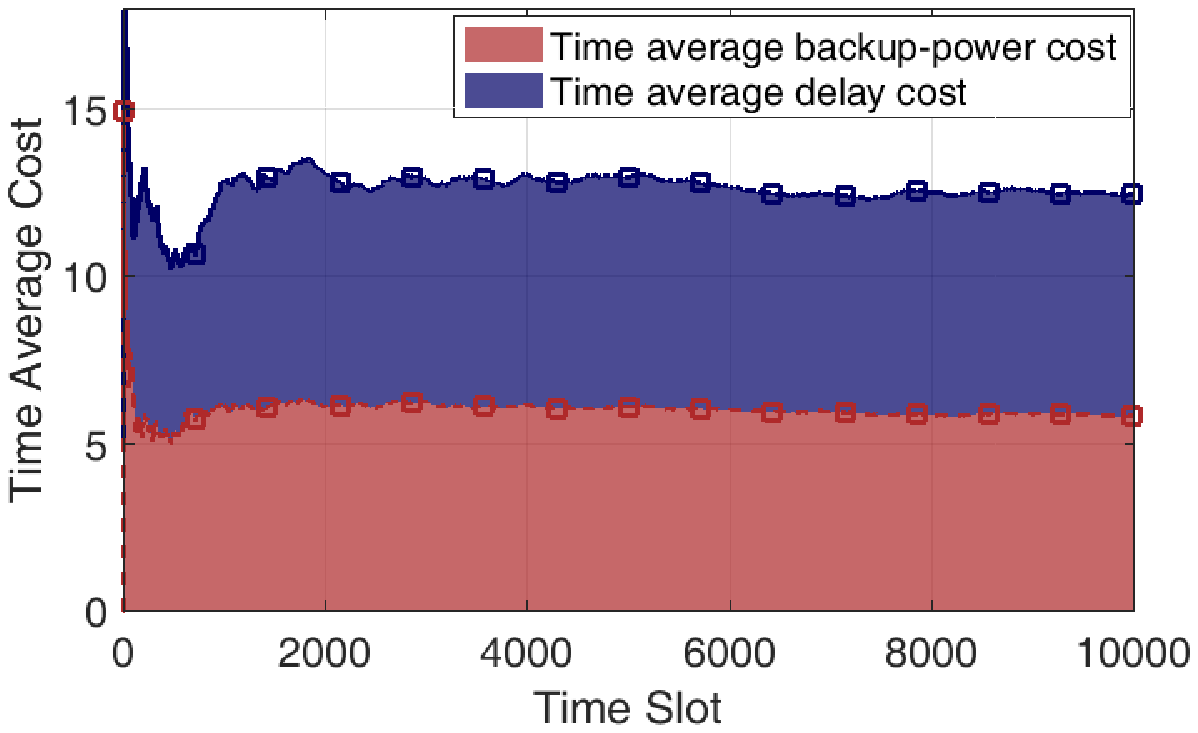}}
	\caption{Run-time cost composition}
	\label{runtime_cost_comp}
\end{figure}

\begin{figure}
	\centering\includegraphics[width=0.45\textwidth]{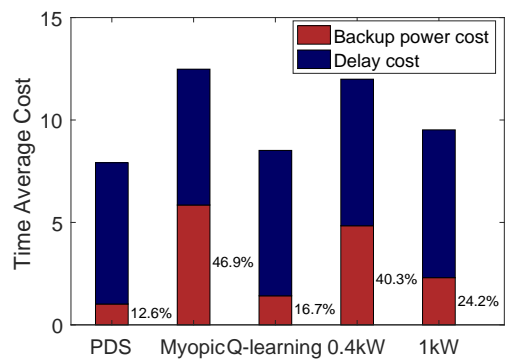}
	\caption{Time-average cost composition ($T=10^4$) }\label{cost_composition}
\end{figure}
Figure \ref{runtime_cost_comp} shows the cost compositions of the PDS-based algorithm and Myopic optimization in 10000 time slots. It can be observed that the proposed PDS-based algorithm significantly cuts the back-up power cost by taking conservative action at low battery states which avoids the usage of backup power. By contrast, Myopic optimization frequently leads the battery state into the insufficient zone as shown in figure \ref{battery_dis} and results in significant backup power costs. Figure \ref{cost_composition} presents the composition of time-average cost at the end of simulation. It can be observed that the PDS-based algorithm and Q-leaning reduce the total cost by considering the future system dynamics and incur low backup power cost accounting for 12.6\% and 16.7\% of the total cost, respectively. These fractions are much lower than those of the remaining schemes which do not consider the long-term system performance.

\subsection{Optimal Offloading Strategy}
\begin{figure}
	\centering	
	\subfigure[Optimal offloading policy ($m=4$)]{\label{mu_m4}
		\includegraphics[width=0.45\textwidth]{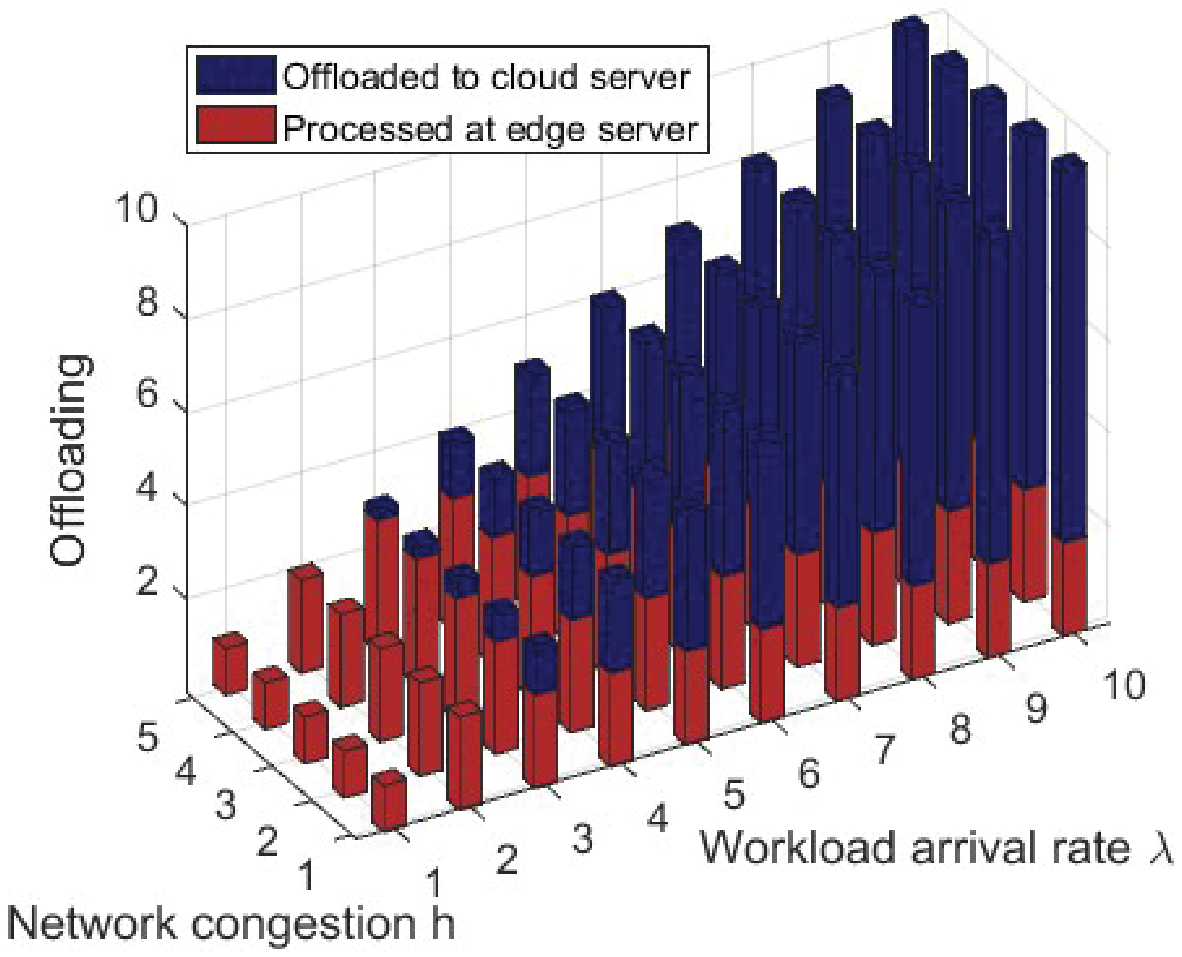}}	
	\subfigure[Optimal offloading policy ($m=10$)]{\label{mu_m10}
		\includegraphics[width=0.45\textwidth]{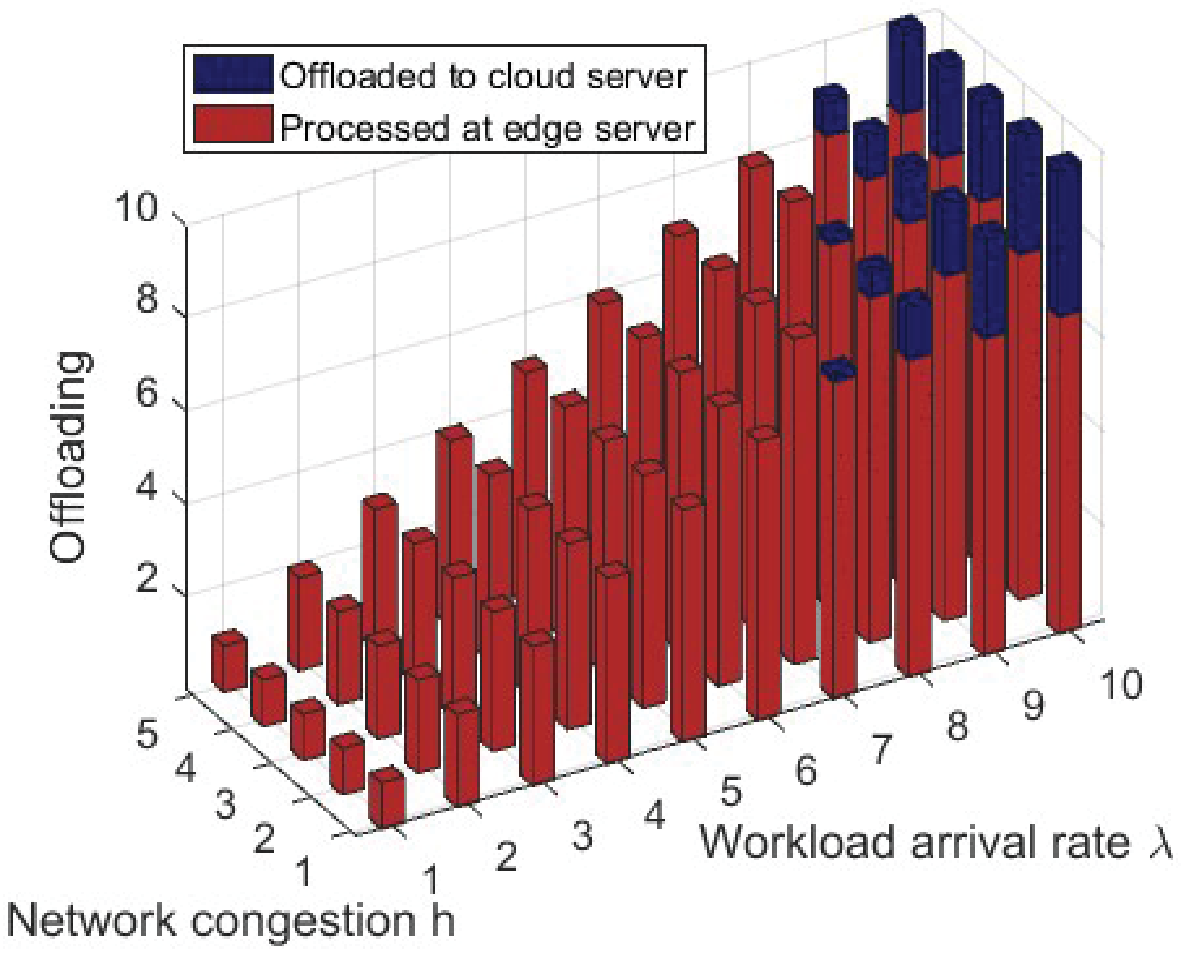}}
	\caption{Optimal offloading policy}
	\label{mu_m}
\end{figure}

Finally, we visualize the optimal offloading strategy in Figure \ref{mu_m} given a fixed number of active servers under various system states. As the network becomes more congested and there are more active edge servers, the optimal strategy chooses to process more workload at the local edge system. However, as the workload arrival rate increases, the amount of workload that can be processed at the local edge system is saturated after certain workload arrival rate.

\section{Conclusion}
In this paper, we studied the joint offloading and autoscaling problem in energy harvesting MEC systems. We found that foresightedness and adaptivity are keys to reliable and efficient operation of renewable-powered MEC. To enable fast learning in the presence of \textit{a priori} unknown system parameters, a PDS-based reinforcement learning algorithm was developed to learn the optimal offloading and autoscaling policy by exploiting the special structure of the considered problem. Our simulations showed that the proposed scheme can significantly improve the edge computing performance even if it is powered by intermittent and unpredictable renewable energy. Future work includes investigating large-scale edge computing systems powered by renewable energy, e.g. green power-aware geographical load balancing.

\bibliographystyle{IEEEtran}
\bibliography{refs}

\end{document}